\definecolor{lightblue}{RGB}{92, 134, 171}
\newtheorem{theorem}{Theorem}
\newtheorem{lemma}[theorem]{Lemma}
\title{Layer-wise Update Aggregation with Recycling for Communication-Efficient Federated Learning}
\author{
  Jisoo Kim\textsuperscript{1}, Sungmin Kang\textsuperscript{2}, Sunwoo Lee\textsuperscript{1}\thanks{Corresponding Author} \\
  \textsuperscript{1}Inha University\\
  Incheon, Republic of Korea\\
  \textsuperscript{2}University of Southern California\\
  Los Angeles, CA, USA\\
  \texttt{starprin3@inha.edu, kangsung@usc.edu, sunwool@inha.ac.kr}\\
}
\begin{document}

\maketitle

\begin{abstract}
Expensive communication cost is a common performance bottleneck in Federated Learning (FL), which makes it less appealing in real-world applications.
Many communication-efficient FL methods focus on discarding a part of model updates mostly based on gradient magnitude.
In this study, we find that recycling previous updates, rather than simply dropping them, more effectively reduces the communication cost while maintaining FL performance.
We propose \texttt{FedLUAR}, a Layer-wise Update Aggregation with Recycling scheme for communication-efficient FL.
We first define a useful metric that quantifies the extent to which the aggregated gradients influence the model parameter values in each layer.
\texttt{FedLUAR} selects a few layers based on the metric and recycles their previous updates on the server side.
Our extensive empirical study demonstrates that the update recycling scheme significantly reduces the communication cost while maintaining model accuracy.
For example, our method achieves nearly the same AG News accuracy as FedAvg, while reducing the communication cost to just $17\%$.
\end{abstract}

\section{Introduction}

While Federated Learning has become a distributed learning method of choice recently, there still exists a huge gap between practical efficacy and theoretical performance.
Especially, the communication cost of model aggregation is one of the most challenging issues in realistic FL environments.
It is well known that larger models exhibit stronger learning capabilities.
The larger the model, the higher the communication cost.
Thus, addressing the communication cost issue is crucial for realizing scalable and practical FL applications.

Many communication-efficient FL methods focus on partially \lq{}dropping\rq{} model parameters and thus their updates.
Quantization-based FL methods reduce communication
costs by lowering the numerical precision of transmitted model parameters,
representing each parameter with a lower bit-width.
Pruning-based FL methods directly remove a portion of model parameters to avoid the associated gradient computations and communication overhead.
Model reparameterization-based FL methods adjust the model architecture using matrix decomposition techniques, reducing the total number of parameters.
While all these approaches reduce the communication cost, they commonly compromise learning capability by either reducing the number of parameters or degrading the data representation quality.

In this paper, we propose \texttt{FedLUAR}, a Layer-wise Update Aggregation with Recycling method for communication-efficient and accurate FL.
Instead of dropping the updates for a part of model parameters, we consider \lq{}reusing\rq{} the old updates multiple times in a layer-wise manner.
Our study first defines a useful metric that quantifies the extent to which the aggregated gradients influence the model parameter values in each layer.
Based on the metric, a small number of layers are selected to recycle their previous updates on the server side.
Clients can omit these updates when sending their locally accumulated updates to the server.
This layer-wise update recycling method allows only a subset of less important layers to lose their update quality while maintaining high-quality updates for all the other layers.
Our study shows that, by carefully choosing the update recycling layers, the model aggregation cost can be dramatically reduced while maintaining the model accuracy.

Our study provides critical insights into achieving a practical trade-off between communication cost reduction and the level of noise introduced by any types of communication-efficient FL methods.
First, by introducing noise into layers where the update magnitude is small relative to the model parameter magnitude, the adverse impact of the noise can be minimized, thus preserving FL performance.
Second, our study empirically demonstrates that the update recycling approach achieves faster loss convergence compared to simply dropping updates for the same layers.
Our theoretical analysis also shows that \texttt{FedLUAR} converges to a neighborhood of a stationary point when updates are recycled in a sufficiently small number of layers.
We designed our FL method to leverage these findings and it can be readily applied to various FL applications to improve scalability.

We evaluate the performance of \texttt{FedLUAR}~\footnote{https://github.com/swblaster/FedLUAR} using representative benchmark datasets: CIFAR-10~\cite{Krizhevsky09learningmultiple}, CIFAR-100, FEMNIST~\cite{caldas2018leaf}, and AG News~\cite{zhang2015character}
We first compare \texttt{FedLUAR} to several state-of-the-art communication-efficient FL methods: Look-back Gradient Multiplier~\cite{azam2021recycling}, FedPAQ~\cite{reisizadeh2020fedpaq}, FedPara~\cite{hyeon2021fedpara}, PruneFL~\cite{jiang2022model}, FedDropoutAvg~\cite{gunesli2021feddropoutavg}, and FedBAT~\cite{fedbat}.
We also compare the performance between using and not using the recycling method for advanced FL optimizers.
Finally, we provide extensive ablation study results that further validate the efficacy of our proposed method, including performance comparisons based on the number of layers with recycled updates.
These experimental results and our analysis show that \texttt{FedLUAR} provides a novel and efficient approach to reducing the communication cost while maintaining the model accuracy in FL environments.
\section{Related Work}
\textbf {Structured Model Compression} --
Several low-rank decomposition-based FL methods have been proposed, which re-parameterize the model weights to reduce either computational or communication costs~\cite{phan2020stable,vogels2020practical,hyeon2021fedpara}.
These methods modify the model architecture in a structured way using various tensor approximation techniques~\cite{konevcny2016federated}.
The re-parameterization methods often increase the number of network layers, resulting in higher implementation complexity and higher computational costs.
Moreover, they struggle to maintain model performance when the rank is significantly reduced.
\\
\textbf {Sketched Model Compression} --
Quantization-based FL methods have been actively studied to reduce the number of bits used per parameter~\cite{wen2017terngrad,reisizadeh2020fedpaq,haddadpour2021federated,chen2024mixed}.
Model pruning methods, such as PruneFL~\cite{jiang2022model}, FedMP~\cite{jiang2023computation}, FedPruning~\cite{li2024model}, GossipFL~\cite{tang2022gossipfl}, and SpaFL~\cite{kim2024spafl}, remove a portion of model parameters to reduce both computational and communication costs.
These methods are categorized under a \textit{sketching} approach.
Although quantization methods reduce communication overhead, they uniformly degrade the data representation quality of all parameters, overlooking their varying contributions to the training process.
The pruning methods potentially harm the model's learning capability since they directly reduce the number of parameters.
\\
\textbf{Other Communication-Efficient FL Methods} --
FedLAMA~\cite{lee2023layer} adaptively adjusts model aggregation frequency in a layer-wise manner.
Dynamic model aggregation method proposed in~\cite{kamp2018efficient} aggregates local models in a decentralized manner.
FedKD~\cite{wu2022communication} reduces communication cost by employing knowledge distillation in place of model aggregation.
These methods address the high communication cost issue in FL. However, they do not consider the possibility of \lq{}reusing\rq{} previous gradients.
In this work, we focus on recycling previously computed gradients to reduce the communication cost.
Bandwidth-aware Compression Ratio Scheduling (BCRS)~\cite{tang2024bandwidth} adjusts the top-k compression ratio in a network bandwith-aware manner.
YOGA~\cite{liu2023yoga} adopts a layer-wise aggregation strategy based on layer priority, which shares conceptual similarities with our proposed method.
However, it assumes a peer-to-peer decentralized FL environment without a central server, which is not applicable to server-based FL scenarios.
\\
\textbf{Gradient-Weight Ratio in Deep Learning} --
A few of the recent works focus on utilizing gradient-weight ratio.
Some researchers adjust learning rate based on the ratio to improve the model performance \cite{you2017large,mehmeti2023weight}.
These previous works theoretically demonstrate that the gradient-weight ratio delivers useful insights that can be utilized to adjust the inherent noise scale of stochastic gradients.
In this study, we propose and employ a similar metric in FL environments: the ratio of accumulated updates to the initial model parameters at each communication round.
\\
\textbf{Gradient Dynamics} --
Depending on the geometry of the parameter space, the gradient may remain consistent over several training iterations~\cite{azam2021recycling}.
It has also been shown that the loss landscape becomes smoother as the batch size increases, and thus the stochastic gradients can remain similar for more iterations~\cite{keskar2016large,lin2020extrapolation,lee2023achieving}.
We explore the possibility of \lq{}recycling\rq{} such stable gradients multiple times.
By recycling previous updates, clients can avoid update aggregation in some network layers.
In the following section, we will discuss how to safely recycle updates in a layer-wise manner.

\section{Method}
In this section, we first introduce a layer prioritization metric that can be efficiently calculated during training. Then, we present a communication-efficient FL method that recycles updates for layers with low priority.
Finally, we provide a theoretical guarantee of convergence for the proposed FL method.

\subsection {Motivation}


\setlength{\intextsep}{10pt}
\setlength{\columnsep}{10pt} 
\begin{wrapfigure}{r}{0.6\textwidth}
\vspace{-0.3cm} 
\centering
\includegraphics[width=\linewidth]{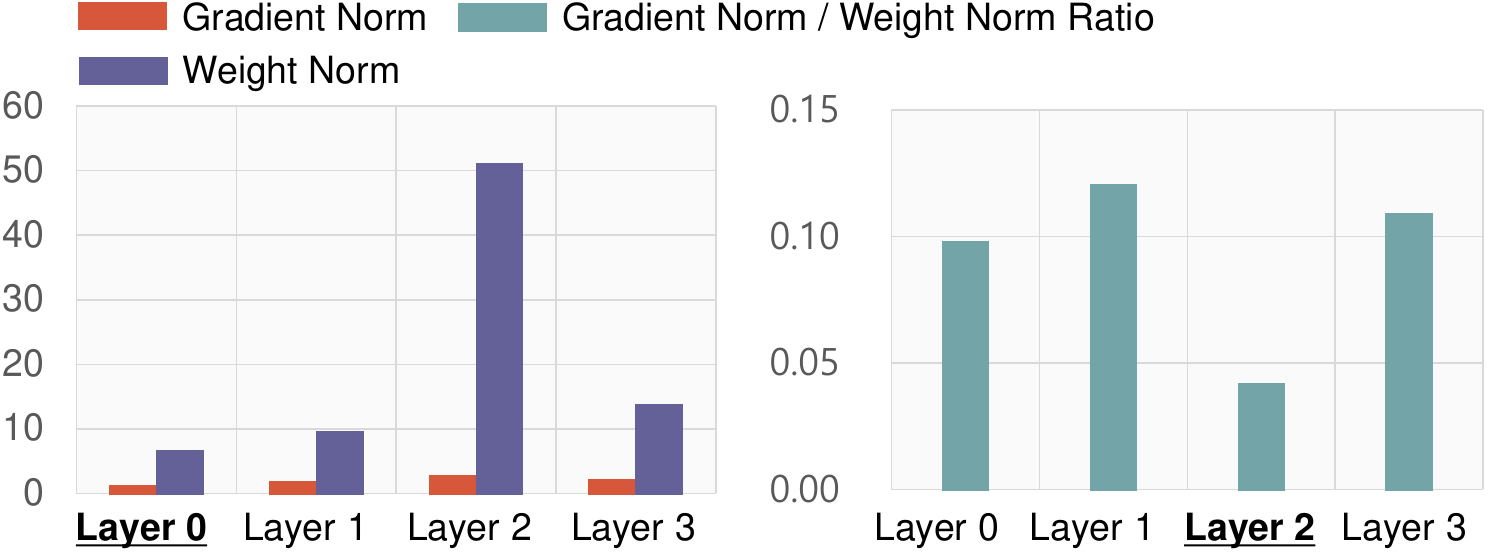}
\caption*{(a) FEMNIST (CNN)}
\vspace{0.1cm}
\includegraphics[width=\linewidth]{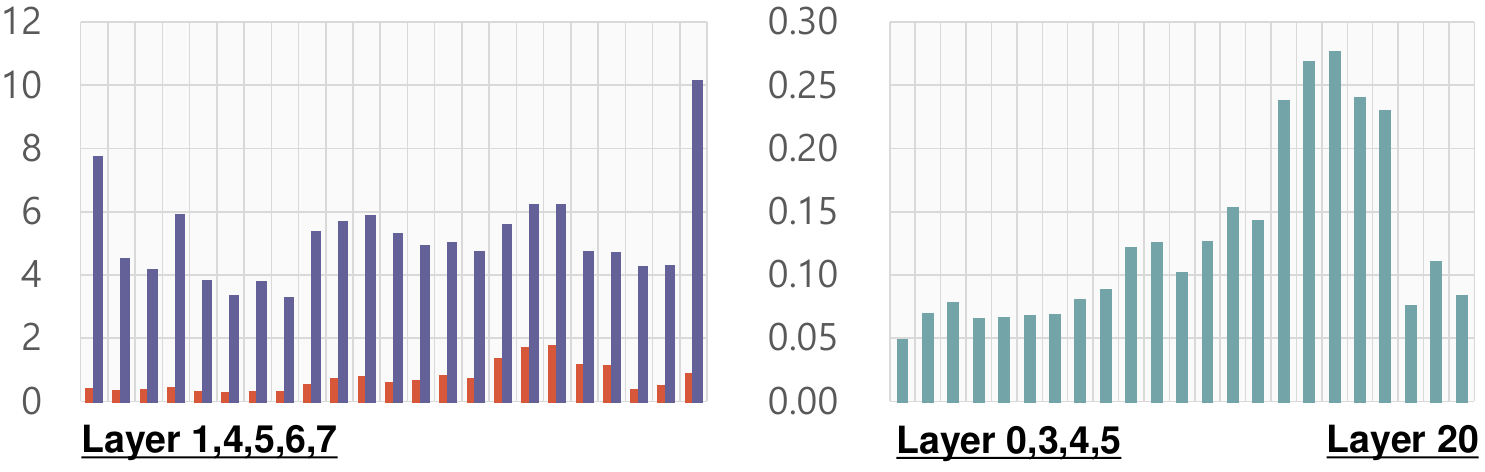}
\caption*{(b) CIFAR-10 (ResNet20)}
\caption{
    The layer-wise gradient norm and weight norm comparison (left) and the ratio of the gradient norm to the weight norm (right). It is clearly shown that the layers with the smallest gradients do not always least significantly affect the model parameter values.
}
\vspace{-1em}
\label{fig:observation}
\end{wrapfigure}

Many existing communication-efficient FL methods focus on how to reduce the communication cost while keeping the gradient magnitude as close as possible to the original.
For instance, update sparsification methods select a subset of parameters with small gradients and omit their updates~\cite{alistarh2018convergence,bibikar2021federated}.
Layer-wise model aggregation methods selectively aggregate the local updates at a subset of layers with small gradient norms~\cite{azam2021recycling,lee2023layer}.
These methods commonly assume that larger gradients indicate greater importance.

Figure~\ref{fig:observation} shows layer-wise intermediate data collected from FEMNIST and CIFAR-10 training.
The left-side chart compares the gradient norm and the weight norm while the right-side chart shows the ratio of the gradient norm to the weight norm.
The IDs of a few layers with small gradient magnitudes (left) and ratios (right) are shown at the bottom of the charts.
The detailed hyper-parameter settings can be found in Appendix.

The key message from this empirical study is that layers with small gradients do not necessarily show a small ratio of gradient norm to weight norm.
The small ratio can be interpreted as a less significant impact of the update on changes in parameter value.
Even when the gradient is large, if the corresponding parameter is also large, its effect on the layer's output will remain minimal.
From the perspective of each layer, therefore, the ratio may serve as a more critical metric than the magnitude of the gradients alone.
This observation motivates us to explore a novel approach to prioritizing network layers by focusing on the ratio of gradient magnitude to weight magnitude rather than solely monitoring gradient magnitude.

\subsection {Layer-wise Update Recycling}
\textbf {Gradient-Weight Ratio Analysis} --
We prioritize network layers using the ratio of gradient magnitude to weight magnitude.
Given $L$ layers of a neural network, the prioritization metric $s_{t,l}$ is defined as follows.
\begin{align}
    s_{t,l} = \frac{\| \Delta_{t,l} \|}{\|\mathbf{x}_{t,l} \|}, \forall l \in \{0, \cdots, L-1 \} \label{eq:metric}
\end{align}
where $\Delta_{t,l}$ is the accumulated local updates averaged across all the clients at round $t$ for layer $l$, and $\mathbf{x}_{t,l}$ is the initial model parameters of layer $l$ at round $t$.
Intuitively, this metric quantifies the relative gradient magnitude based on parameter magnitude.
If $s_{t,l}$ is measured large, we expect the layer's parameters to move fast in the parameter space making it sensitive to the update correctness.
In contrast, if $s_{t,l}$ is small, the layer's parameters will not be dramatically changed after each update.
We assign low priority to layer $l$ if its $s_{t,l}$ is small, and high priority if it is large.
In this way, all the $L$ layers can be prioritized based on how actively the parameters are changed after each round.

This metric can be efficiently measured on the server-side.
The $\mathbf{x}_{t,l}$ is already stored on the server before every communication round.
All FL methods aggregate the local updates after every round, and thus $\Delta_{t,l}$ is also already ready to be used on the server.
Therefore, $s_{t,l}$ can be easily measured without any extra communications.
This is a critical advantage considering the limited network bandwidth in typical FL environments.
\\
\textbf {Layer-wise Stochastic Update Recycling Method} --
We design a novel FL method that recycles the previous updates for a subset of layers.
The first step is to calculate a probability distribution of $L$ network layers based on the prioritization metric shown in (\ref{eq:metric}).
The probability of layer $l$ to be chosen is computed as follows:
\begin{align}
    p_{t,l} = \frac{1/s_{t,l}}{\sum_{l=0}^{L-1} 1/s_{t,l}}, \forall l \in \{0, \cdots, L-1 \}. \label{eq:probability}
\end{align}
Each layer has a weight factor $\frac{1}{s_{t,l}}$ so that it is less likely chosen if its priority is low.
Dividing it by $\sum_{i=0}^{L-1} 1/s_{t,l}$ ensures the sum of all weight factors equals $1$, allowing $p$ values to be directly used as a weight factor of random sampling.
Second, our method randomly samples $\delta$ layers using the probability distribution $\textbf{p}$ shown in (\ref{eq:probability}).
We define those sampled layers at round $t$ as $\mathcal{R}_t$.
Finally, the sampled $\delta$ layers are updated using the previous round's updates instead of the latest updates.
That is, the clients do not send to the server the local updates for those $\delta$ layers.

It is worth noting that the weighted random sampling-based layer selection prevents the updates for low-priority layers from being recycled excessively.
When low-priority layers are not sampled, their updates will be normally aggregated on the server-side and thus their $s_{t,l}$ values can be updated.
We will analyze the impact of this stochastic layer selection scheme on the overall performance of the update recycling method in Section~\ref{sec:exp}.

\begin{algorithm}[t]
\caption{
    Layer-wise Update Aggregation with Recycling (\texttt{LUAR})
}
\label{alg:luar}
\begin{algorithmic}[1]
    \State {\textbf{Input:} $\Delta_{t}^i$: the latest local updates, $\hat{\Delta}_{t-1}$: the updates used at the previous round, $\mathcal{R}_t$: the set of recycling layers, $\delta$: the number of recycling layers}
    \State {Clients send out the local updates: $\textbf{u}_t^i = [\Delta_{t,l}^i], \forall l \notin \mathcal{R}_t$}
    \State {Server aggregates the updates: $\textbf{u}_t = \frac{1}{a} \sum_{i=1}^{a} \textbf{u}_t^i$}
    \State {$\textbf{r}_t = [\hat{\Delta}_{t-1,l}], \forall l \in \mathcal{R}_t$}
    \State {$\hat{\Delta}_t = [\textbf{r}_t, \textbf{u}_t]$}
    \State Update the recycling scores: $\mathbf{s}_{t,l}$ \Comment{Eq. \eqref{eq:metric}}
    \State {$\textbf{p}^{t} \leftarrow \text{Calculate } p_l^{t}$, $\forall l \in [L]$ \Comment{Eq. \eqref{eq:probability}}}
    \State {$\mathcal{R}_{t+1} \leftarrow $ Random\_Choice($[L]$, $\delta$, $\textbf{p}^t$)}
    \State {\textbf{Output:} $\hat{\Delta}_t, \mathcal{R}_{t+1}$}
\end{algorithmic}
\end{algorithm}

\begin{algorithm}[h]
\caption{
    Federated Learning with Layer-wise Update Aggregation with Recycling (\texttt{FedLUAR})
}
\label{alg:framework}
\begin{algorithmic}[1]
    \State{\textbf{Input:} $a$: the number of active clients per round, $T$: the total number of rounds}
    \State {$\mathcal{R}_0 \leftarrow$  an empty set.}
    \For{$t \in \{0, \cdots, T-1 \}$}
        \State {$\mathcal{A} = \textrm{Random\_Choice}([\mathcal{N}], a)$.}
        \State {Server sends out $\textbf{x}_t$, $\mathcal{R}_{t}$ to the clients $\forall i \in [\mathcal{A}]$.}
        \State {Client receives the model: $\textbf{x}_{t,0}^{i} = \textbf{x}_t$.}
        \For{$j \in \{1, \cdots, \tau \}$}
            \State {$\textbf{x}_{t, j}^{i} = \textrm{Local\_Update}(\textbf{x}_{t, j-1}^{i})$.}
        \EndFor
        \State {Clients calculate the update $\Delta_t^i = \textbf{x}_{t,\tau}^i - \textbf{x}_{t,0}^i$.}
        \State {$\hat{\Delta}_t, \mathcal{R}_{t+1} = \textrm{LUAR}(\Delta_t^i, \hat{\Delta}_{t-1}, \mathcal{R}_t$) \Comment{Alg.\ref{alg:luar}}}
         \State {$\textbf{x}_{t+1} = \textbf{x}_t + \hat{\Delta}_t$}
    \EndFor
    \State{\textbf{Output:} $\textbf{x}_{T}$}
\end{algorithmic}
\end{algorithm}

We formally define the update recycling method as follows.
\begin{align}
    \textbf{u}_t &= [\Delta_{t,l}], \forall l \notin \mathcal{R}_t \label{nt}\\ 
    \textbf{r}_t &= [\hat{\Delta}_{t-1,l}], \forall l \in \mathcal{R}_t \label{rt} \\
    \hat{\Delta}_t &= [\textbf{r}_t, \textbf{u}_t], \label{concatenate} 
\end{align}
where ${\textbf{u}_t}$ is the updates for layers not included in $\mathcal{R}_t$, $\textbf{r}_t$ is the recycled updates for $\delta$ layers in $\mathcal{R}_t$, and $\hat{\Delta}_t$ is the global update composed of $\mathbf{u}_t$ and $\mathbf{r}_t$.
Algorithm 1 shows the Layer-wise Update Aggregation with Recycling (LUAR) method.
Note that the number of layers whose update will be recycled, $\delta$, is a user-tunable hyper-parameter.
We will further discuss how $\delta$ affects the model accuracy as well as the communication cost in Appendix \ref{subsec:extra_result}.

\setlength{\intextsep}{10pt}
\setlength{\columnsep}{10pt} 
\begin{wrapfigure}{r}{7cm}
\vspace{-0.5em}
\centering
\includegraphics[width=0.5\columnwidth]{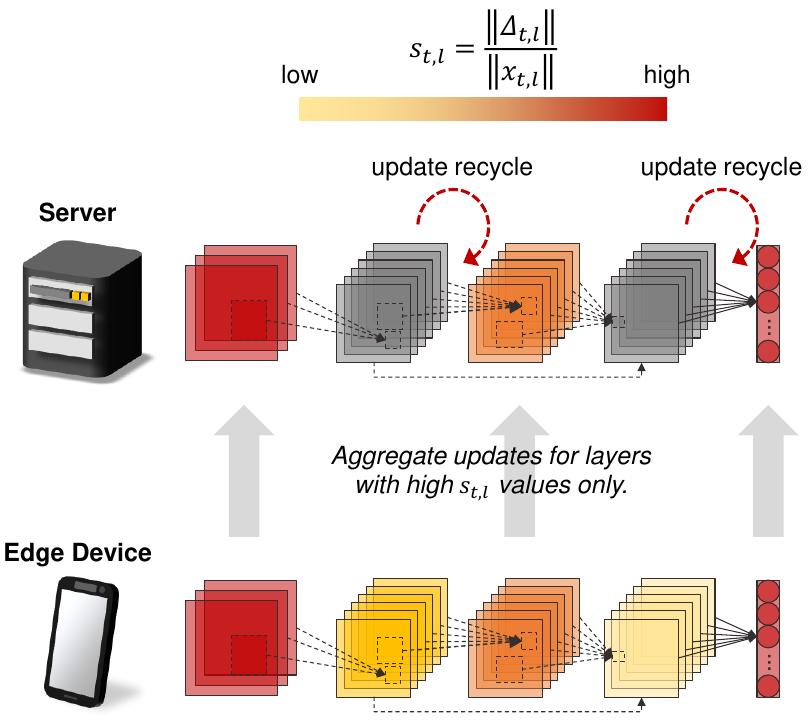}
\caption{
    Schematic illustration of \texttt{FedLUAR}.
}
\vspace{-1em}
\label{fig:schematic}
\end{wrapfigure}

\textbf {Federated Learning Framework} --
Algorithm \ref{alg:framework} shows \texttt{FedLUAR}, a FL framework built upon \texttt{LUAR} (Alg.~\ref{alg:luar}).
Before the active clients download the initial model parameters $\mathbf{x}_t$, the server informs them of the set of layers to be recycled, denoted by $\mathcal{R}_t$ (line 5).
Subsequently, each client performs local training for $\tau$ iterations.
The resulting local updates are then aggregated using \texttt{LUAR} (line 11).
Finally, the global model is updated using $\hat{\Delta}_t$.
Figure~\ref{fig:schematic} shows a schematic illustration of \texttt{FedLUAR}.
Each client transmits updates only for the layers with large $s_{t,l}$ values.
For the remaining layers, the server recycles the previous updates.
When $\delta = 0$, $\hat{\Delta}_t$ becomes the same as $\Delta_t$, effectively reducing the method to vanilla FedAvg.
In this paper, we use FedAvg as the base federated optimization algorithm for simplicity.
However, extending it to more advanced FL optimizers is straightforward, as \texttt{LUAR} is agnostic to the choice of optimizer.

\noindent
\textbf{Potential Limitations} --
\texttt{FedLUAR} requires the server to notify active clients about which layer updates should be omitted when uploading their local updates.
This introduces an additional communication cost compared to other FL methods.
However, this extra overhead is likely negligible, as the list of recycled layer IDs ($\delta$ integers) can be transmitted along with the initial model parameters at the beginning of each communication round.

\subsection {Theoretical Analysis}
We consider non-convex and smooth optimization problems:
\begin{align}
    \min_{\mathbf{x} \in \mathbb{R}^d} F(\mathbf{x}) := \frac{1}{m} \sum_{i=1}^m F_i(\mathbf{x}), \nonumber
\end{align}
where $F_i(\mathbf{x}) = \mathbb{E}_{\xi_i \sim D_i}[f(\mathbf{x}, \xi_i)]$ is the local loss function associated with the local data distribution $D_i$ of client $i$ and $m$ is the number of clients.
Our analysis is based on the following assumptions.
\\
\textbf{Assumption 1.} \textit{(Lipschitz continuity) There exists a constant $\mathcal{L}>0$, such that $\| \nabla F_i(
\mathbf{x}) - \nabla F_i(\mathbf{y}) \| \leq \mathcal{L} \| \mathbf{x} - \mathbf{y} \|, \forall \mathbf{x},\mathbf{y} \in \mathbb{R}^d, \textrm{ and } i\in[m]$.}
\\
\textbf{Assumption 2.} \textit{(Unbiased local gradients) The local gradient estimator is unbiased such that $\mathbb{E}_{\xi_i \sim D_i}[\nabla f(\mathbf{x}, \xi_i)] = \nabla F_i(\mathbf{x}), \forall i \in [m]$.}
\\
\textbf{Assumption 3.} \textit{(Bounded local and global variance) There exist two constants $\sigma_L > 0$ and $\sigma_G > 0$, such that the local gradient variance is bounded by $\mathbb{E}[\| \nabla f(\mathbf{x}, \xi_i) - \nabla F_i(\mathbf{x}) \|]^2 \leq \sigma_L^2, \forall i \in [m]$, and the global variability is bounded by $\mathbb{E}\left[ \| \nabla F_i(\mathbf{x}) - \nabla F(\mathbf{x}) \|^2 \right] \leq \sigma_G^2, \forall i \in [m]$.}

\textbf{Noise Definition} -- We define $\hat{g}$ as a stochastic gradient vector that corresponds to the $\delta$ layers whose updates will be recycled.
Likewise, the corresponding full-batch gradient is defined as $\nabla \hat{F}(\mathbf{x})$.
We quantify the ratio of $\| \nabla \hat{F}(\mathbf{x}_t) \|^2$ to $\| \nabla F(\mathbf{x}_t) \|^2$, which is $\leq 1$, as $\kappa$.
By the definition of $\nabla \hat{F}(\mathbf{x})$, $\kappa$ goes to zero if none of the layers recycle their updates.
To analyze the impact of the update recycling in Algorithm~\ref{alg:luar}, we also define the quantity of noise $n_t$ as follows.
\begin{align}
    n_t := \hat{\Delta}_t - \Delta_t = \frac{1}{m} \sum_{i=1}^{m} \sum_{j=0}^{\tau - 1} \left( \hat{g}_{t-k,j}^{i} - \hat{g}_{t,j}^{i} \right).  \label{eq:noise}  
\end{align}
The $k$ in (\ref{eq:noise}) represents the degree of update staleness, which increases as the update is recycled in consecutive communication rounds.
As shown in Algorithm~\ref{alg:luar}, our proposed method does not specify the upper bound of $k$ and adaptively selects the recycling layers based only on $s_{t,l}$ values.
Therefore, we analyze the convergence rate of Algorithm~\ref{alg:framework} without any assumptions on the $k$ value.

Herein, we analyze the convergence rate of Algorithm~\ref{alg:framework} (See Appendix for proofs).
\begin{lemma}\label{lemma:noise}
(noise) Under assumption $1 \sim 3$, if the learning rate $\eta \leq \frac{1}{\mathcal{L}\tau}$, the accumulated noise is bounded as follows.
\begin{align}
    \sum_{t=0}^{T-1} \mathbb{E} \left[ \left\| n_t \right\|^2 \right] & \leq 4T\tau^2\sigma_L^2 + 8T\tau^2 \sigma_G^2 \nonumber \\
    &\quad + 8\kappa\tau^2 \sum_{t=0}^{T-1} \mathbb{E} \left[ \left\| \nabla F(\mathbf{x}_{t}) \right\|^2 \right] \nonumber \\
    &\quad + \frac{8\tau L^2}{m} \sum_{t=0}^{T-1} \sum_{i=1}^{m} \sum_{j=0}^{\tau-1} \mathbb{E} \left[ \left\|  \mathbf{x}_{t,j}^{i} - \mathbf{x}_{t} \right\|^2 \right], \nonumber
\end{align}
where $m$ is the number of clients.
\end{lemma}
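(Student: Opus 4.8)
The plan is to bound $\mathbb{E}\!\left[\|n_t\|^2\right]$ term by term by peeling $\hat g$ into its components with repeated use of Jensen's inequality and $\|a+b\|^2 \le 2\|a\|^2 + 2\|b\|^2$, invoking the three assumptions, and only then summing over $t$. First I would move the client average and the local-step sum outside the norm: applying Jensen to $\frac{1}{m}\sum_i$ and Cauchy--Schwarz to $\sum_{j=0}^{\tau-1}$ gives
\[
\mathbb{E}\!\left\| n_t \right\|^2 \le \frac{\tau}{m}\sum_{i=1}^{m}\sum_{j=0}^{\tau-1}\mathbb{E}\!\left\| \hat g_{t-k,j}^{i} - \hat g_{t,j}^{i} \right\|^2.
\]
Since no bound on the staleness $k$ is assumed, I cannot relate the two gradients through smoothness; instead I decouple them with $\|a-b\|^2 \le 2\|a\|^2 + 2\|b\|^2$, which reduces the task to bounding a single squared stochastic-gradient norm $\mathbb{E}\|\hat g_{s,j}^{i}\|^2$ at an arbitrary round $s$.

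Next I would bound $\mathbb{E}\|\hat g_{s,j}^{i}\|^2$ by splitting off the zero-mean local noise (Assumptions 2 and 3 restricted to the recycled layers), contributing $\sigma_L^2$ and leaving $\mathbb{E}\|\nabla\hat F_i(\mathbf{x}_{s,j}^{i})\|^2$, where $\nabla\hat F_i$ is the client-$i$ counterpart of $\nabla\hat F$ on the $\delta$ recycled layers. I then peel this in two stages: add and subtract $\nabla\hat F_i(\mathbf{x}_s)$ and apply $\|a+b\|^2\le 2\|a\|^2+2\|b\|^2$, using $\mathcal L$-Lipschitzness (Assumption 1, whose sub-vector restriction is still $\mathcal L$-Lipschitz) to turn $\|\nabla\hat F_i(\mathbf{x}_{s,j}^{i}) - \nabla\hat F_i(\mathbf{x}_s)\|^2$ into $\mathcal L^2\|\mathbf{x}_{s,j}^{i}-\mathbf{x}_s\|^2$. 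The crucial point is the client-heterogeneity term: averaging $\nabla\hat F_i(\mathbf{x}_s)-\nabla\hat F(\mathbf{x}_s)$ over $i$ makes the cross term vanish \emph{exactly} because $\nabla\hat F$ is the client average, so $\frac{1}{m}\sum_i \mathbb{E}\|\nabla\hat F_i(\mathbf{x}_s)\|^2 \le \sigma_G^2 + \|\nabla\hat F(\mathbf{x}_s)\|^2$ with coefficient $1$ rather than $2$ on each piece. This is exactly what yields the factor $8$ (not $16$) on $\sigma_G^2$ and on $\kappa$. Finally I substitute $\|\nabla\hat F(\mathbf{x}_s)\|^2 \le \kappa\|\nabla F(\mathbf{x}_s)\|^2$ from the definition of $\kappa$, obtaining $\frac{1}{m}\sum_i\mathbb{E}\|\hat g_{s,j}^{i}\|^2 \le \sigma_L^2 + \frac{2\mathcal L^2}{m}\sum_i\|\mathbf{x}_{s,j}^{i}-\mathbf{x}_s\|^2 + 2\sigma_G^2 + 2\kappa\|\nabla F(\mathbf{x}_s)\|^2$.

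I would then sum over $t$ and re-index the stale contributions. The fresh terms $\mathbb{E}\|\hat g_{t,j}^{i}\|^2$ sum directly, and I argue that summing the stale terms $\mathbb{E}\|\hat g_{t-k,j}^{i}\|^2$ over $t$ is controlled by the same total over all rounds, so that every quantity ($\sigma_L^2$, $\sigma_G^2$, the drift $\|\mathbf{x}_{t,j}^{i}-\mathbf{x}_t\|^2$, and $\|\nabla F(\mathbf{x}_t)\|^2$) reappears only at round $t$ in the final bound. Collecting the constants --- the $\tau$ from Cauchy--Schwarz, the $2$ from decoupling stale from fresh, and the $2$ from the stale-plus-fresh recombination, giving an overall prefactor $\tfrac{4\tau}{m}$ on $\sum_t\sum_{i,j}\mathbb{E}\|\hat g_{t,j}^{i}\|^2$ --- then reproduces the coefficients $4$, $8$, $8\kappa$, and $8$ in the claimed bound.

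I expect the main obstacle to be precisely this re-indexing of the stale sum: since a layer can be recycled for several consecutive rounds, distinct rounds $t$ may point back to the same source round $t-k$, so the map $t \mapsto t-k$ need not be injective and the naive re-indexing could over-count. Making the step rigorous requires accounting for how the recycling schedule assigns each round its source round and verifying that the historical gradient norms are absorbed into the round-$t$ quantities without an extra staleness-dependent factor; the learning-rate hypothesis $\eta \le \frac{1}{\mathcal L\tau}$ is what keeps the auxiliary local-drift estimates used along the way under control.
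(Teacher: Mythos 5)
Your proposal follows essentially the same route as the paper's proof: split the stale and fresh gradient sums via $\|a-b\|^2 \le 2\|a\|^2 + 2\|b\|^2$ and Jensen, peel off the local variance, the Lipschitz drift, and the (exactly cancelling) heterogeneity term, then invoke the definition of $\kappa$ — and all of your coefficients ($4$, $8$, $8\kappa$, $8$) agree with the paper's. The re-indexing of the stale sum that you flag as the main obstacle is carried out in the paper without justification — it simply asserts $\sum_{t}\mathbb{E}\big[\|\nabla\hat F(\mathbf{x}_{t-k})\|^2\big] \le \sum_{t}\mathbb{E}\big[\|\nabla\hat F(\mathbf{x}_{t})\|^2\big]$ and the analogous bound for the drift term — so your concern about the non-injectivity of $t \mapsto t-k$ applies equally to the published argument rather than being a defect unique to your plan.
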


\begin{theorem}\label{theorem:fedavg}
Under assumption $1 \sim 3$, if the learning rate $\eta \leq \frac{1 - 16 \kappa}{6\sqrt{30}\mathcal{L}\tau}$ and $\kappa < \frac{1}{16}$, we have
\begin{align}
\sum_{t=0}^{T-1} \mathbb{E} \left[\left\| \nabla F(\mathbf{x}_t) \right\|^2\right] &\leq \frac{4}{(1-16\kappa)\eta\tau} \left( F(\mathbf{x}_{0}) - F(\mathbf{x}_T) \right) \nonumber \\
&\quad + \frac{4 T}{1-16\kappa}\left( \frac{\mathcal{L}\eta}{m} + 4 + 9\mathcal{L}^2 \right) \sigma_L^2 \label{eq:neighbor} \\
&\quad + \frac{1080T \mathcal{L}^2 \eta^2 \tau^2}{1-16\kappa} \sigma_G^2. \nonumber
\end{align}
\end{theorem}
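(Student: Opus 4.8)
The plan is to run the standard non-convex descent argument for FedAvg, but to carry the recycling noise $n_t$ through every step and absorb it at the end via Lemma~\ref{lemma:noise}. First I would apply $\mathcal{L}$-smoothness (Assumption~1) to consecutive global iterates. Since the update rule gives $\mathbf{x}_{t+1} - \mathbf{x}_t = \hat{\Delta}_t = \Delta_t + n_t$, the descent inequality reads
\[
\mathbb{E}[F(\mathbf{x}_{t+1})] \le \mathbb{E}[F(\mathbf{x}_t)] + \mathbb{E}\!\left[\langle \nabla F(\mathbf{x}_t), \Delta_t + n_t\rangle\right] + \frac{\mathcal{L}}{2}\,\mathbb{E}\!\left[\|\Delta_t + n_t\|^2\right].
\]
I would then split the first-order term into a clean part $\langle \nabla F(\mathbf{x}_t), \Delta_t\rangle$ and a perturbation $\langle \nabla F(\mathbf{x}_t), n_t\rangle$, and the second-order term via $\|\Delta_t + n_t\|^2 \le 2\|\Delta_t\|^2 + 2\|n_t\|^2$, so that all recycling-induced quantities are isolated as multiples of $\|n_t\|^2$ and $\langle \nabla F(\mathbf{x}_t), n_t\rangle$.

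For the $\Delta_t$ contribution I would use Assumption~2 to write $\mathbb{E}[\Delta_t]$ as an average of full-batch gradients $\nabla F_i$ evaluated along the local trajectories, extract the dominant descent term $-\eta\tau\|\nabla F(\mathbf{x}_t)\|^2$, and bound the residual with $\mathcal{L}$-smoothness together with a client-drift bound on $\sum_{i,j}\mathbb{E}[\|\mathbf{x}_{t,j}^i - \mathbf{x}_t\|^2]$. This client-drift bound (a standard auxiliary lemma, controlling local progress by $\sigma_L^2$, $\sigma_G^2$, and a gradient term scaling like $\mathcal{L}^2\eta^2\tau^2$) is the same quantity that appears in the last line of Lemma~\ref{lemma:noise}, so the two bounds must be applied jointly. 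For the noise terms I would apply Young's/Cauchy--Schwarz inequalities to $\langle \nabla F(\mathbf{x}_t), n_t\rangle$, keep $\|n_t\|^2$ explicit, and defer their control to the summed form.

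The decisive step is summation over $t = 0,\dots,T-1$ followed by rearrangement. Telescoping yields $F(\mathbf{x}_0) - F(\mathbf{x}_T)$ on the right, and substituting Lemma~\ref{lemma:noise} replaces $\sum_t \mathbb{E}[\|n_t\|^2]$ by the terms $4T\tau^2\sigma_L^2$, $8T\tau^2\sigma_G^2$, the $\kappa$-term $8\kappa\tau^2\sum_t\mathbb{E}[\|\nabla F(\mathbf{x}_t)\|^2]$, and the client-drift sum. The crucial observation is that the $\kappa$-dependent contributions from both the first- and second-order noise terms combine into a coefficient proportional to $\kappa\tau^2$ multiplying $\sum_t\mathbb{E}[\|\nabla F(\mathbf{x}_t)\|^2]$; moving these to the left-hand side and normalizing by $\eta\tau$ produces exactly the $(1-16\kappa)$ prefactor, and the hypothesis $\kappa < \tfrac{1}{16}$ is precisely what guarantees this prefactor stays positive.

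The main obstacle, and the place where the stated learning-rate condition is consumed, is closing the loop between the descent inequality and the coupled drift/noise bounds, since both sides carry $\sum_t\mathbb{E}[\|\nabla F(\mathbf{x}_t)\|^2]$ and a $\sigma_G^2$ contribution scaling like $\mathcal{L}^2\eta^2\tau^2$. I would choose $\eta$ so that $1080\,\mathcal{L}^2\eta^2\tau^2 \le (1-16\kappa)^2$, which is equivalent to $\eta \le \frac{1-16\kappa}{6\sqrt{30}\,\mathcal{L}\tau}$ since $1080 = (6\sqrt{30})^2$; this simultaneously absorbs the residual gradient-norm terms from the drift bound and leaves precisely the $\frac{1080\,T\mathcal{L}^2\eta^2\tau^2}{1-16\kappa}\sigma_G^2$ summand in the final statement. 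Dividing through by $\eta\tau/4$ and collecting the $\sigma_L^2$ contributions (the $\frac{\mathcal{L}\eta}{m}$ from the stochastic-gradient variance and the $4 + 9\mathcal{L}^2$ from the drift and noise lemmas) then yields the claimed bound; the most delicate part is the constant bookkeeping needed to make the $16\kappa$ and $1080$ factors line up exactly.
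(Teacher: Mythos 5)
Your plan follows essentially the same route as the paper's proof: a smoothness-based descent step with the recycling noise $n_t$ isolated, a client-drift lemma and the noise lemma substituted after telescoping, the $8\kappa\tau^2\sum_t\mathbb{E}[\|\nabla F(\mathbf{x}_t)\|^2]$ term moved to the left to produce the $(1-16\kappa)$ prefactor, and the step-size condition (equivalently $1080\mathcal{L}^2\eta^2\tau^2\le(1-16\kappa)^2$) used to absorb the residual gradient-norm sum at cost of a factor $2$. The decomposition, the role of each lemma, and the constant bookkeeping you describe all match the paper's argument.
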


\textbf{Remark 1.} \textit{Lemma~\ref{lemma:noise} shows that the update recycling method ensures the noise magnitude bounded regardless of how many times the updates are recycled and how many layers recycle their updates if $\kappa$ is sufficiently small.
This result can serve as a foundation that allows users to safely recycle updates in a layer-wise manner, thereby reducing communication costs.}


\textbf{Remark 2.} \textit{Algorithm~\ref{alg:framework} converges to a neighborhood of a stationary point, as the second and third terms on the right-hand side of (\ref{eq:neighbor}) do not vanish as $T \rightarrow \infty$.
Furthermore, the term of $(4 + 9\mathcal{L}^2)\sigma_L^2$ is independent of the learning rate $\eta$ and remains non-zero even as $\eta \rightarrow 0$.
Although it does not ensure convergence to an exact minimum, this rough guarantee is considered useful in real-world applications~\cite{zhang2021understanding,goh2017momentum}.}


In general, as the degree of non-IIDness increases, the global variance tends to grow due to greater discrepancies among local datasets. As shown in the final term on the right-hand side of (7), recycling updates in more layers increases the coefficient $\frac{1080T\mathcal{L}^2\eta^2 \tau^2}{1 - 16\kappa}$, which in turn amplifies the final term.
Consequently, the model is expected to converge more slowly.
This suggests using smaller learning rate as the degree of non-IIDness increases to maintain the convergence rate.

\subsection {Memory Usage Analysis}

\begin{wraptable}{r}{7cm}
\scriptsize
\centering
\begin{tabular}{llll} \toprule
\multirow{2}{*}{Dataset (Model)} & \multirow{2}{*}{Algorithm} & \multirow{2}{*}{$\delta$} & Memory \\
& & & Footprint (MB) \\ \midrule
\multirow{2}{*}{CIFAR-10 (ResNet20)} & FedAvg & - & 33.49 \\
& \texttt{FedLUAR} & 10 & 15.23 \\ \midrule
\multirow{2}{*}{CIFAR-100 (WRN28-10)} & FedAvg & - & 4,462.80 \\
& \texttt{FedLUAR} & 14 & 2,604.88 \\ \midrule
\multirow{2}{*}{FEMNIST (CNN)} & FedAvg & - & 806.11 \\
& \texttt{FedLUAR} & 2 & 204.73 \\ \midrule
\multirow{2}{*}{AG News (DistillBERT)} & FedAvg & - & 8,294.18 \\
& \texttt{FedLUAR} & 30 & 1,825.42 \\
\bottomrule
\end{tabular}
\vspace{0.3cm}
\caption{
    Comparison of memory usage observed during training.
}
\label{tab:mem}
\end{wraptable}

In FedAvg, server should receive local models from all active clients.
Consequently, the maximum memory footprint is $a \cdot d$, where $a$ is the number of active clients and $d$ is the model size.
By contrast, \texttt{FedLUAR} receives local models from active clients except $\delta$ layers.
Thus, the memory footprint is $a\cdot(d - k)$, where $k$ is the size of $\delta$ layers.
Instead, the previous global update should be kept in the memory space for the $\delta$ layers, consuming $k$ space only.
Therefore, \texttt{FedLUAR}’s memory footprint is $a\cdot(d - k) + k < a\cdot d$.

To support our analysis, we actually measured the memory footprint of FedAvg and \texttt{FedLUAR} during training.
First, the total number of clients is 128 and only randomly selected 32 clients are activated at each communication round. We use MPI to run FL on 2 GPUs.
Thus, each process locally train 16 models and then all the locally trained models are aggregated using \texttt{MPI\_Allreduce()}.
Table~\ref{tab:mem} shows the memory footprint of each process, observed during training under this setting.
We can clearly see that \texttt{FedLUAR} uses less memory space than FedAvg.
This advantage is directly related to the reduced communication cost, which will be discussed in Section~\ref{sec:comm}. 
\section{Experiments} \label{sec:exp}
\textbf{Experimental Settings} --
All experiments are conducted on a GPU cluster which has 2 NVIDIA A6000 GPUs per machine.
We use TensorFlow 2.15.0 for training and MPI for model aggregations.
All experiments were performed at least 3 times, and the average accuracies are reported.
\\
\textbf{Datasets} --
We evaluate the performance of our proposed method on representative benchmarks: CIFAR-10 (ResNet20~\cite{he2016deep}), CIFAR-100 (Wide-ResNet28~\cite{zagoruyko2016wide}), FEMNIST (CNN), and AG News (DistillBert~\cite{sanh2019distilbert}).
When tuning hyper-parameters, we conduct a grid search with a sufficiently small unit size (e.g., 0.1 for learning rate).
To generate non-IID datasets, we use label-based Dirichlet distributions with $\alpha = 0.1$, which indicates highly non-IID conditions.
\\
\textbf{Data Heterogeneity} -- For IID datasets, we simulate non-IID settings using Dirichlet distributions. The concentration coefficient $\alpha$ is set to 0.1 for CIFAR-10/100 and 0.5 for AG News.

 \subsection {Comparative Study} \label{subsec:comparative study}
We first present an accuracy comparison among SOTA communication-efficient FL methods below.

\noindent
\begin{minipage}[t]{0.45\linewidth}
\begin{itemize}[noitemsep, topsep=1pt, left=0pt]
    \item LBGM (Low-rank Approximation)~\cite{azam2021recycling}
    \item FedPAQ (Quantization)~\cite{reisizadeh2020fedpaq}
    \item FedPara (Reparameterization)~\cite{hyeon2021fedpara}
\end{itemize}
\end{minipage}
\hfill
\begin{minipage}[t]{0.45\linewidth}
\begin{itemize}[noitemsep, topsep=1pt, left=0pt]
    \item PruneFL (Pruning)~\cite{jiang2022model}
    \item FedDropoutAvg (Dropping)~\cite{gunesli2021feddropoutavg}
    \item FedBAT (Binarization)~\cite{fedbat}
\end{itemize}
\end{minipage}



Table~\ref{tab:sota} shows the performance comparison (See Appendix for the detailed settings).
The total number of clients is 128 and randomly chosen 32 clients participate in every communication round.
Note that the FL methods cannot have exactly the same communication cost due to differences in their mechanisms.
To ensure fair comparisons, we find algorithm-specific settings that achieve accuracy reasonably close to the baseline (FedAvg) while minimizing communication costs, and then compare the validation accuracy across algorithms.

Overall, \texttt{FedLUAR} achieves accuracy comparable to the baseline while significantly reducing communication costs across all four benchmarks. Notably, for FEMNIST and AG News, it matches FedAvg's accuracy with less than $20\%$ of the communication cost.
Our method also outperforms all other SOTA methods.
While FedPAQ and FedBAT reduce communication cost, they suffer from noticeable accuracy drops.
Regardless of the dataset, \texttt{FedLUAR} consistently delivers the highest accuracy among communication-efficient FL methods.
These results demonstrate that \texttt{LUAR} effectively finds less critical layers and recycles their updates, minimizing the communication cost without sacrificing performance.

\setlength{\tabcolsep}{5.6pt}
\begin{table*}[t]
\scriptsize
\centering
\begin{tabular}{lcccccccc} \toprule
\multirow{3}{*}{Method} & \multicolumn{2}{c}{CIFAR-10} & \multicolumn{2}{c}{CIFAR-100} & \multicolumn{2}{c}{FEMNIST} & \multicolumn{2}{c}{AG News} \\
 & \multicolumn{2}{c}{(ResNet20)} & \multicolumn{2}{c}{(WRN-28)} & \multicolumn{2}{c}{(CNN)} & \multicolumn{2}{c}{(DistillBERT)} \\  
 & Accuracy & Comm & Accuracy & Comm & Accuracy & Comm & Accuracy & Comm  \\ \midrule
FedAvg & $61.27\pm 0.7\%$& $1.00$ & $59.88\pm 0.8\%$ & $1.00$ & $71.01\pm 0.4\%$ & $1.00$ & $82.66\pm0.2\%$& $1.00$ \\ 
LBGM & $54.87\pm 0.5\%$& $0.65$ &$57.13\pm 0.2\%$ & $0.87$ & $69.83\pm1.0\%$ & $0.71$ & $77.96\pm 0.1\%$& $0.23$ \\ 
FedPAQ & $57.42 \pm 0.2\%$& $0.50$ & $36.15 \pm 0.1\%$ & $0.50$ & $71.54 \pm 0.1\%$ & $0.25$ & $ 82.72\pm 0.1\%$& $0.25$ \\
FedPara & $55.16\pm0.1\%$& $0.51$ & $46.14\pm0.1\%$ & $0.61$ & $67.69\pm0.1\%$ & $0.22$ & $75.22\pm0.1\%$& $0.69$ \\
PruneFL & $56.76 \pm 0.1\%$& $0.51$ & $ 59.40\pm 0.1\%$ & $0.69$ & $69.42 \pm 0.4\%$& $0.19$ & $77.25 \pm 0.1\%$ & $0.22$ \\
FDA & $56.54 \pm 0.3\%$& $0.50$ & $51.25 \pm 0.1\%$ & $0.60$ &$70.61 \pm 0.1\%$ & $0.25$ &$64.94 \pm 0.1\%$& $0.50$ \\
FedBAT & $39.56 \pm 0.1\%$& $0.03$ & $47.24 \pm 0.1\%$ & $0.03$ & $68.27 \pm 0.1\%$ & $0.03$ & $76.38 \pm 0.1\%$ & $0.57$ \\
\rowcolor{lightgray!30}\texttt{FedLUAR} & $\mathbf{60.15\pm0.7\%}$& $\mathbf{0.47}$ & $\mathbf{59.73\pm 0.6\%}$ & $\mathbf{0.61}$ & $\mathbf{73.17 \pm 0.1\%}$ & $\mathbf{0.18}$ & $\mathbf{82.80 \pm 0.1\%}$ & $\mathbf{0.17}$ \\ \bottomrule
\end{tabular}
\caption{
    Classification performance comparison. Comm denotes the communication cost relative to FedAvg.
}
\vspace{0.2cm}
\label{tab:sota}
\end{table*}

\begin{figure*}[t]
\centering
\includegraphics[width=\columnwidth]{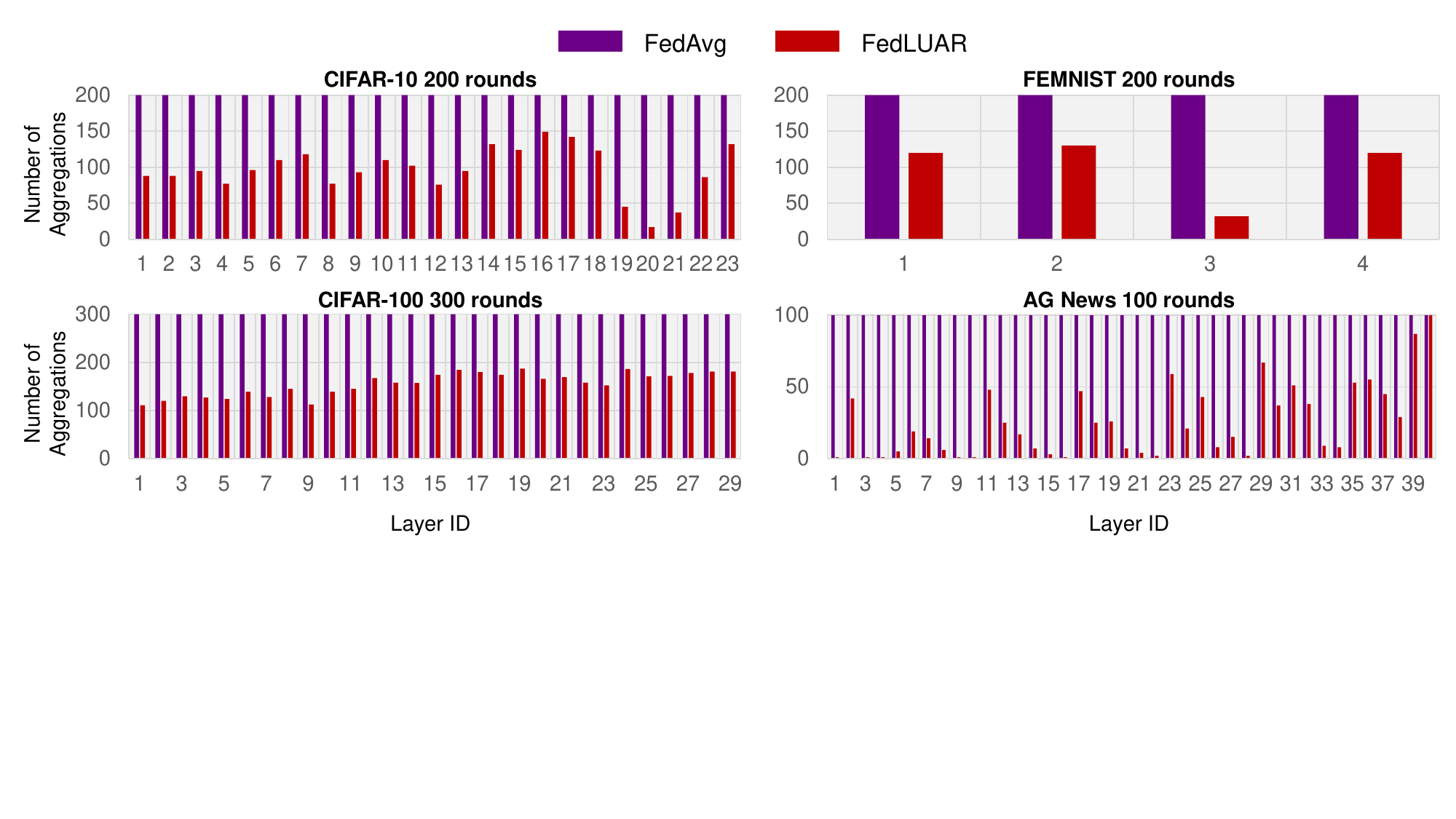}
\caption{
    Number of model aggregations per layer. \texttt{FedLUAR} significantly reduces aggregation frequency across all benchmarks. The gap from FedAvg indicates how often updates were recycled (i.e., communications skipped).
}
\label{fig:comm}
\end{figure*}

\renewcommand{\arraystretch}{1.1}
\setlength{\tabcolsep}{4pt} 

\begin{table*}[t]
\centering
\scriptsize
\begin{minipage}[t]{0.47\textwidth}
    \centering
    \begin{tabular}{lcccc} \toprule
    & Periodic Averaging & \texttt{LUAR} (Proposed) & Comm  & $\delta$ \\ \midrule
    FedProx & $61.74\pm0.1\%$ & $61.20\pm0.1\%$ & $0.54$ &\multirow{7}{*}{\textbf{10}}\\
    FedPAQ & $57.42\pm0.2\%$ & $57.40\pm0.2\%$ & $0.33$ &\\
    FedOpt & $62.42\pm0.1\%$ & $62.28\pm0.2\%$ & $0.50$ &\\
    MOON   & $62.33\pm1.2\%$ & $61.65\pm0.1\%$ & $0.51$ &\\
    FedMut & $61.27\pm0.1\%$ & $60.42\pm0.1\%$ & $0.56$ &\\
    FedACG & $65.02\pm0.1\%$ & $64.28\pm0.1\%$ & $0.55$ &\\
    PruneFL & $56.76\pm0.1\%$ & $55.43\pm0.1\%$ & $0.49$ & \\ \bottomrule
    \end{tabular}
    \subcaption{CIFAR-10 (ResNet20)}
    \label{tab:harmony_cifar}
\end{minipage}
\hspace{0.03\textwidth}
\begin{minipage}[t]{0.47\textwidth}
    \centering
    \begin{tabular}{lcccc} \toprule
    & Periodic Averaging & 
    \texttt{LUAR} (Proposed) & Comm  & $\delta$ \\ \midrule
    FedProx & $71.94\pm0.1\%$ & $73.45\pm0.1\%$ & $0.09$ &\multirow{7}{*}{\textbf{2}} \\
    FedPAQ & $71.54\pm0.1\%$ & $71.15\pm0.1\%$ & $0.11$ & \\
    FedOpt & $72.34\pm0.1\%$ & $71.91\pm0.1\%$ & $0.22$ & \\
    MOON   & $71.55\pm0.1\%$ & $71.63\pm0.1\%$ & $0.24$ & \\
    FedMut & $71.91\pm0.1\%$ & $72.31\pm0.1\%$ & $0.26$ & \\
    FedACG & $72.16\pm0.1\%$ & $71.94\pm0.1\%$ & $0.21$ & \\ 
    PruneFL & $69.42\pm0.1\%$ & $69.11\pm 0.1\%$ & $0.11$ & \\ \bottomrule
    \end{tabular}
    \subcaption{FEMNIST (CNN)}
    \label{tab:harmony_femnist}
\end{minipage}
\caption{
    CIFAR-10 and FEMNIST performance comparison between before and after applying \texttt{LUAR}. \texttt{LUAR} is applied to half of the model layers for both datasets, using ResNet20 for CIFAR-10 and a CNN for FEMNIST. The \textit{Comm} column shows the ratio of \texttt{LUAR}'s cost to the FedAvg's cost. 
}
\label{tab:harmony}
\end{table*}

\subsection {Harmonization with Other FL Methods} \label{subsec:harmonization}

The proposed FL method does not have any dependencies on the local training algorithm.
To demonstrate this, we apply \texttt{LUAR} to several advanced FL methods, including FedProx~\cite{tian2020fedprox}, MOON~\cite{qirbin2024moon}, FedOpt~\cite{reddi2020adaptive}, FedMut~\cite{ming2024mut}, FedACG~\cite{geeho2024acg}, and PruneFL~\cite{jiang2022model}, and analyze its effect on model accuracy.
Table~\ref{tab:harmony} shows CIFAR-10 and FEMNIST accuracy comparisons (See Appendix for details).
\texttt{LUAR} maintains the validation accuracy while significantly reducing the communication cost across all three benchmarks.
Comm denotes communication cost relative to full model averaging. For instance, FedPAQ reduces communication to $50\%$ of FedAvg, while \texttt{LUAR} further reduces it to $22\%$ of FedPAQ, just $11\%$ of FedAvg, without compromising accuracy.
These results show that \texttt{LUAR} can effectively complement advanced FL algorithms and is readily applicable to real-world scenarios.

\subsection {Communication Cost Analysis} \label{sec:comm}
\texttt{FedLUAR} enables clients to skip uploading updates for less important layers, thereby reducing communication costs.
Figure~\ref{fig:comm} shows the number of communications for each layer.
The communication count charts indicate that \texttt{FedLUAR} requires significantly fewer communications than vanilla FedAvg, while achieving comparable model accuracy.
An interesting observation is that, in FEMNIST and AG News, the layer with the largest number of parameters tends to be recycled most frequently, resulting in a substantial reduction in total communication cost.
However, this trend is not observed in the CIFAR-10 and CIFAR-100 benchmarks.
Thus, we conclude that the proposed method is independent of layer size and specific model architectures, as it adaptively identifies the least significant layers regardless of the model design.

\begin{table}[t]
\scriptsize
\centering
\setlength{\tabcolsep}{2.5pt}
\begin{tabular}{lcccccc} \toprule
\multirow{2}{*}{Layer Selection Scheme} & \multicolumn{2}{c}{CIFAR-10} & \multicolumn{2}{c}{FEMNIST} & \multicolumn{2}{c}{AG News} \\ 
& Acc. (\%) & Comm. & Acc. (\%) & Comm. & Acc. (\%) & Comm. \\ \midrule
Random & $53.94\%$ & 0.48 & $71.10\%$ & 0.51 &  $80.27\%$ & 0.23 \\
Top (input-side) &  $56.03\%$ & 0.73 & \multicolumn{2}{c}{N/A} &  $79.71 \%$ & 0.21 \\
Bottom (output-side) & $45.02 \%$ & 0.24 & $69.54 \%$ & 0.13 &  $81.14\%$ & 0.45 \\
Gradient norm & $55.88 \%$ & 0.55 & $70.91 \%$ & 0.70 &  $75.06\%$ & 0.22 \\ 
Deterministic recycling & $48.47\%$ & \textbf{0.20} & $69.08 \%$ & \textbf{0.02} & $80.22\%$ & \textbf{0.15} \\ 
\texttt{LUAR}(Proposed) & $\mathbf{60.15} \%$ & 0.47 & $\mathbf{73.17} \%$ & 0.18 &   $\mathbf{82.80}\%$ & 0.17 \\ \bottomrule
\end{tabular}
\vspace{0.3cm}
\caption{
    Performance comparison with different layer selection schemes. For CIFAR-10 and FEMNIST, half the layers were reused; for AG News, 30 layers. \textit{Comm.} denotes communication cost normalized to FedAvg. Selecting top layers in FEMNIST leads to early-stage divergence.
}
\label{tab:ablation1}
\end{table}


\subsection {Ablation Study}

To further validate the effectiveness of the proposed metric shown in (\ref{eq:metric}), we conduct an ablation study as follows.
Fixing the number of layers to recycle updates, $\delta$, we measure model accuracy using different layer selection metrics.
By comparing these accuracies, we can determine which metric is most effective in identifying the least critical layers in terms of their contribution to the global model training.
In particular, we compare the classification performance between the most popular gradient-based layer selection and our proposed \texttt{LUAR}.
Table~\ref{tab:ablation1} shows the performance comparisons.

This ablation study provides several key insights.
First, \texttt{LUAR} outperforms uniform random sampling, demonstrating that our proposed metric~(\ref{eq:metric}) effectively identifies less critical layers.
Second, even with the same metric, a deterministic selection strategy yields lower accuracy.
Persistently recycling updates for layers with low $s_{t,l}$ values can cause them to be too much outdated, introducing excessive noise that degrades model performance.
Third, \texttt{LUAR} consistently outperforms the gradient norm-based method, supporting our earlier observation (Fig.~\ref{fig:observation}) that gradient magnitude alone is insufficient to assess update importance.
We thus conclude that the gradient-to-weight ratio best captures update quality, achieving the highest accuracy while significantly reducing communication costs.

\setlength{\intextsep}{5pt}
\setlength{\columnsep}{10pt}
\begin{wraptable}{r}{7cm}
\scriptsize
\centering
\begin{tabular}{lcccc} \toprule
Dataset & Dropping & Recycling & Comm. Cost & $\delta$ \\ \midrule
CIFAR-10 & $46.89 \pm 0.1\%$ & $50.07 \pm 1.6\%$ & $0.30$ & 16 \\
FEMNIST & $64.69 \pm 0.2\%$ & $73.17 \pm 1.1\%$ & $0.18$ & 2  \\
AG News & $77.05 \pm 0.1\%$ & $82.80 \pm 0.1\%$ & $0.17$ & 30 \\ \bottomrule
\end{tabular}
\caption{
     Benchmark performance comparison between update dropping and update recycling schemes.
}
\label{tab:ablation2}
\end{wraptable}

Additionally, we compare the classification performance of update dropping and recycling schemes.
Many existing communication-efficient FL methods merely drop a subset of updates.
Table~\ref{tab:ablation2} presents the performance comparisons.
Here, \textit{Dropping} refers to the case where the $\delta$ least critical layers are selected using \texttt{LUAR} and their updates are dropped instead of being recycled.
As expected, \textit{Dropping} achieves the same communication cost reduction as \textit{Recycling}, but its accuracy is significantly lower than that of \textit{Recycling}.
This ablation study clearly demonstrates the superior performance of our proposed update recycling scheme.

\begin{figure}[t!]
\centering
\begin{minipage}[t]{0.495\textwidth}
    \centering
    \includegraphics[width=\linewidth]{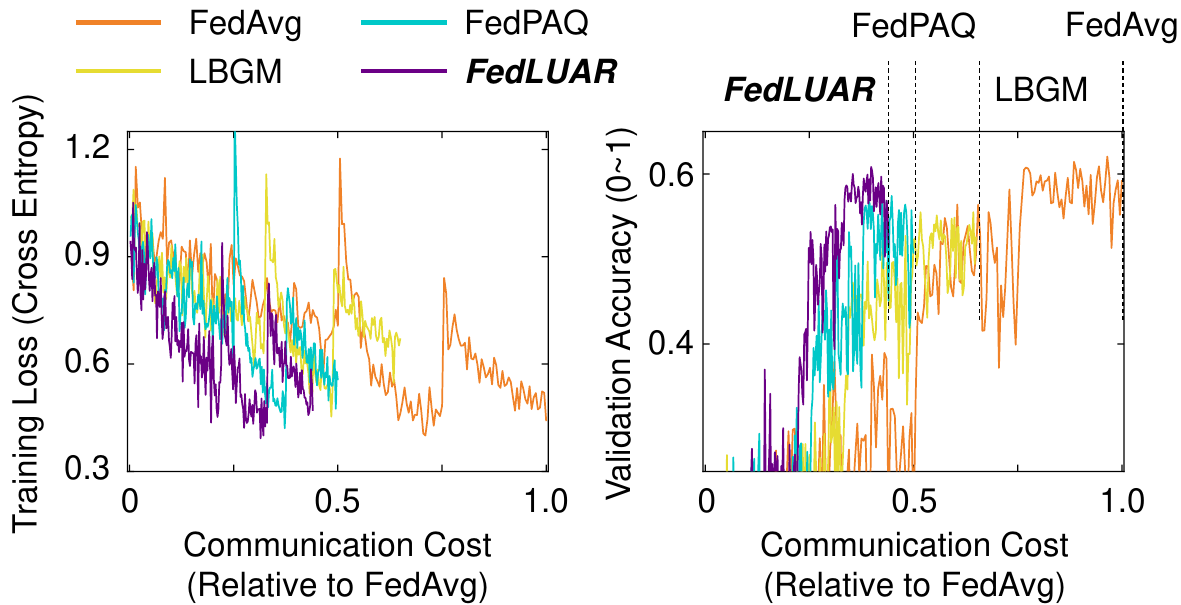}
    \subcaption{CIFAR-10}
\end{minipage}
\hfill
\begin{minipage}[t]{0.495\textwidth}
    \centering
    \includegraphics[width=\linewidth]{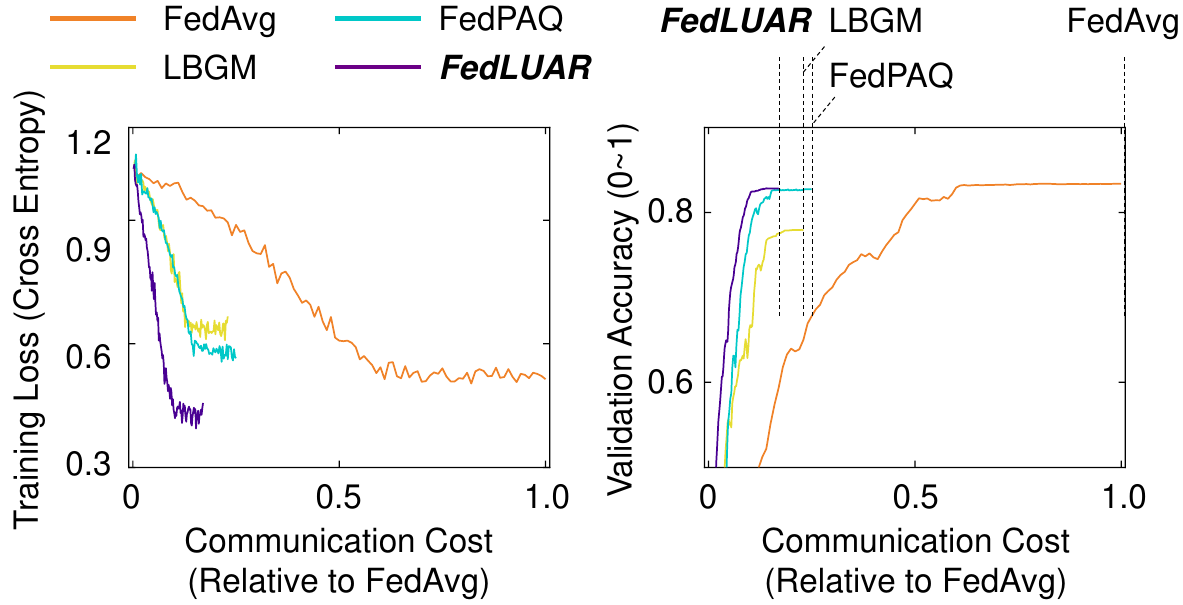}
    \subcaption{AG News}
\end{minipage}
\caption{
    Learning curves for CIFAR-10 (ResNet20) and AG News (DistillBERT), with communication cost (x-axis) normalized to FedAvg. Four representative methods are shown for clarity.
 }
\label{fig:curve}
\end{figure}


\textbf{How much does it accelerate?} -- Figure~\ref{fig:curve} shows the learning curves for CIFAR-10 and AG News.
The x-axis represents the communication cost relative to FedAvg. 
To highlight the difference clearly, we selectively present comparisons among four methods only.
The comparison clearly shows that \texttt{FedLUAR} achieves similar accuracy to FedAvg much faster than other SOTA communication-efficient FL methods.
Since our method incurs little to no additional computational cost, the same performance gain can be expected in terms of the end-to-end training time in realistic FL environments.
In our empirical study, we observed the same performance gains across many different FL benchmarks.
See Appendix for more curve charts and the detailed experimental settings.

\section{Conclusion}
In this paper, we demonstrated that selectively recycling updates in specific layers can reduce communication costs in FL while preserving model accuracy.
In particular, our study empirically proved that the gradient-to-weight magnitude ratio can serve as a practical metric for identifying the least significant layers.
This layer-wise partial model aggregation scheme is expected to facilitate the development of efficient FL applications and promote the partial model training paradigm across various deep learning fields.
We consider developing a communication-efficient Large Language Model fine-tuning method based on the update recycling scheme as a promising direction for future work. A discussion of the broader impact of this work is provided in Appendix~\ref{sec:broader}.

\section* {Acknowledgments}
This work was partly supported by Institute of Information \& communications Technology Planning \& Evaluation (IITP) grant funded by the Korea government(MSIT) (No.RS-2022-00155915, Artificial Intelligence Convergence Innovation Human Resources Development (Inha University)) and National Research Foundation of Korea(NRF) grant funded by the Korea government(MSIT)(No. RS-2024-00452914).

\bibliographystyle{plain}
\bibliography{reference}

\clearpage
\newpage
\section*{NeurIPS Paper Checklist}

\begin{enumerate}

\item {\bf Claims}
    \item[] Question: Do the main claims made in the abstract and introduction accurately reflect the paper's contributions and scope?
    \item[] Answer: \answerYes{} 
    \item[] Justification: The abstract and introduction clearly describe our main contributions, including the proposed layer-wise update recycling method and its communication efficiency benefits, which are consistent with the results shown in Section 4 and 5.
    \item[] Guidelines:
    \begin{itemize}
        \item The answer NA means that the abstract and introduction do not include the claims made in the paper.
        \item The abstract and/or introduction should clearly state the claims made, including the contributions made in the paper and important assumptions and limitations. A No or NA answer to this question will not be perceived well by the reviewers. 
        \item The claims made should match theoretical and experimental results, and reflect how much the results can be expected to generalize to other settings. 
        \item It is fine to include aspirational goals as motivation as long as it is clear that these goals are not attained by the paper. 
    \end{itemize}

\item {\bf Limitations}
    \item[] Question: Does the paper discuss the limitations of the work performed by the authors?
    \item[] Answer: \answerYes{} 
    \item[] Justification: We give the limitations in Section 3 and Appendix A.
    \item[] Guidelines:
    \begin{itemize}
        \item The answer NA means that the paper has no limitation while the answer No means that the paper has limitations, but those are not discussed in the paper. 
        \item The authors are encouraged to create a separate "Limitations" section in their paper.
        \item The paper should point out any strong assumptions and how robust the results are to violations of these assumptions (e.g., independence assumptions, noiseless settings, model well-specification, asymptotic approximations only holding locally). The authors should reflect on how these assumptions might be violated in practice and what the implications would be.
        \item The authors should reflect on the scope of the claims made, e.g., if the approach was only tested on a few datasets or with a few runs. In general, empirical results often depend on implicit assumptions, which should be articulated.
        \item The authors should reflect on the factors that influence the performance of the approach. For example, a facial recognition algorithm may perform poorly when image resolution is low or images are taken in low lighting. Or a speech-to-text system might not be used reliably to provide closed captions for online lectures because it fails to handle technical jargon.
        \item The authors should discuss the computational efficiency of the proposed algorithms and how they scale with dataset size.
        \item If applicable, the authors should discuss possible limitations of their approach to address problems of privacy and fairness.
        \item While the authors might fear that complete honesty about limitations might be used by reviewers as grounds for rejection, a worse outcome might be that reviewers discover limitations that aren't acknowledged in the paper. The authors should use their best judgment and recognize that individual actions in favor of transparency play an important role in developing norms that preserve the integrity of the community. Reviewers will be specifically instructed to not penalize honesty concerning limitations.
    \end{itemize}

\item {\bf Theory assumptions and proofs}
    \item[] Question: For each theoretical result, does the paper provide the full set of assumptions and a complete (and correct) proof?
    \item[] Answer: \answerYes{} 
    \item[] Justification: The theoretical results are fully supported with all assumptions explicitly stated in Section 3.3. Complete and correct proofs are provided in Appendix A.2.
    \item[] Guidelines:
    \begin{itemize}
        \item The answer NA means that the paper does not include theoretical results. 
        \item All the theorems, formulas, and proofs in the paper should be numbered and cross-referenced.
        \item All assumptions should be clearly stated or referenced in the statement of any theorems.
        \item The proofs can either appear in the main paper or the supplemental material, but if they appear in the supplemental material, the authors are encouraged to provide a short proof sketch to provide intuition. 
        \item Inversely, any informal proof provided in the core of the paper should be complemented by formal proofs provided in appendix or supplemental material.
        \item Theorems and Lemmas that the proof relies upon should be properly referenced. 
    \end{itemize}

    \item {\bf Experimental result reproducibility}
    \item[] Question: Does the paper fully disclose all the information needed to reproduce the main experimental results of the paper to the extent that it affects the main claims and/or conclusions of the paper (regardless of whether the code and data are provided or not)?
    \item[] Answer: \answerYes{} 
    \item[] Justification: We report all settings and details necessary to reproduce the main experimental results in Section 4 and Appendix A.3.
    \item[] Guidelines:
    \begin{itemize}
        \item The answer NA means that the paper does not include experiments.
        \item If the paper includes experiments, a No answer to this question will not be perceived well by the reviewers: Making the paper reproducible is important, regardless of whether the code and data are provided or not.
        \item If the contribution is a dataset and/or model, the authors should describe the steps taken to make their results reproducible or verifiable. 
        \item Depending on the contribution, reproducibility can be accomplished in various ways. For example, if the contribution is a novel architecture, describing the architecture fully might suffice, or if the contribution is a specific model and empirical evaluation, it may be necessary to either make it possible for others to replicate the model with the same dataset, or provide access to the model. In general. releasing code and data is often one good way to accomplish this, but reproducibility can also be provided via detailed instructions for how to replicate the results, access to a hosted model (e.g., in the case of a large language model), releasing of a model checkpoint, or other means that are appropriate to the research performed.
        \item While NeurIPS does not require releasing code, the conference does require all submissions to provide some reasonable avenue for reproducibility, which may depend on the nature of the contribution. For example
        \begin{enumerate}
            \item If the contribution is primarily a new algorithm, the paper should make it clear how to reproduce that algorithm.
            \item If the contribution is primarily a new model architecture, the paper should describe the architecture clearly and fully.
            \item If the contribution is a new model (e.g., a large language model), then there should either be a way to access this model for reproducing the results or a way to reproduce the model (e.g., with an open-source dataset or instructions for how to construct the dataset).
            \item We recognize that reproducibility may be tricky in some cases, in which case authors are welcome to describe the particular way they provide for reproducibility. In the case of closed-source models, it may be that access to the model is limited in some way (e.g., to registered users), but it should be possible for other researchers to have some path to reproducing or verifying the results.
        \end{enumerate}
    \end{itemize}

\item {\bf Open access to data and code}
    \item[] Question: Does the paper provide open access to the data and code, with sufficient instructions to faithfully reproduce the main experimental results, as described in supplemental material?
    \item[] Answer: \answerYes{} 
    \item[] Justification: The code is publicly available at \url{https://github.com/swblaster/FedLUAR}, and the repository link is also provided at the bottom of page 2 of the paper for reference.
    \item[] Guidelines:
    \begin{itemize}
        \item The answer NA means that paper does not include experiments requiring code.
        \item Please see the NeurIPS code and data submission guidelines (\url{https://nips.cc/public/guides/CodeSubmissionPolicy}) for more details.
        \item While we encourage the release of code and data, we understand that this might not be possible, so “No” is an acceptable answer. Papers cannot be rejected simply for not including code, unless this is central to the contribution (e.g., for a new open-source benchmark).
        \item The instructions should contain the exact command and environment needed to run to reproduce the results. See the NeurIPS code and data submission guidelines (\url{https://nips.cc/public/guides/CodeSubmissionPolicy}) for more details.
        \item The authors should provide instructions on data access and preparation, including how to access the raw data, preprocessed data, intermediate data, and generated data, etc.
        \item The authors should provide scripts to reproduce all experimental results for the new proposed method and baselines. If only a subset of experiments are reproducible, they should state which ones are omitted from the script and why.
        \item At submission time, to preserve anonymity, the authors should release anonymized versions (if applicable).
        \item Providing as much information as possible in supplemental material (appended to the paper) is recommended, but including URLs to data and code is permitted.
    \end{itemize}

\item {\bf Experimental setting/details}
    \item[] Question: Does the paper specify all the training and test details (e.g., data splits, hyperparameters, how they were chosen, type of optimizer, etc.) necessary to understand the results?
    \item[] Answer: \answerYes{} 
    \item[] Justification: We report all settings and details of experiments in Section 4 and Appendix A.3.
    \item[] Guidelines:
    \begin{itemize}
        \item The answer NA means that the paper does not include experiments.
        \item The experimental setting should be presented in the core of the paper to a level of detail that is necessary to appreciate the results and make sense of them.
        \item The full details can be provided either with the code, in appendix, or as supplemental material.
    \end{itemize}

\item {\bf Experiment statistical significance}
    \item[] Question: Does the paper report error bars suitably and correctly defined or other appropriate information about the statistical significance of the experiments?
    \item[] Answer: \answerNo{}
    \item[] Justification: We do not report error bars, as each experiment was repeated 3 times and the mean result is presented.
    \item[] Guidelines:
    \begin{itemize}
        \item The answer NA means that the paper does not include experiments.
        \item The authors should answer "Yes" if the results are accompanied by error bars, confidence intervals, or statistical significance tests, at least for the experiments that support the main claims of the paper.
        \item The factors of variability that the error bars are capturing should be clearly stated (for example, train/test split, initialization, random drawing of some parameter, or overall run with given experimental conditions).
        \item The method for calculating the error bars should be explained (closed form formula, call to a library function, bootstrap, etc.)
        \item The assumptions made should be given (e.g., Normally distributed errors).
        \item It should be clear whether the error bar is the standard deviation or the standard error of the mean.
        \item It is OK to report 1-sigma error bars, but one should state it. The authors should preferably report a 2-sigma error bar than state that they have a 96\% CI, if the hypothesis of Normality of errors is not verified.
        \item For asymmetric distributions, the authors should be careful not to show in tables or figures symmetric error bars that would yield results that are out of range (e.g. negative error rates).
        \item If error bars are reported in tables or plots, The authors should explain in the text how they were calculated and reference the corresponding figures or tables in the text.
    \end{itemize}

\item {\bf Experiments compute resources}
    \item[] Question: For each experiment, does the paper provide sufficient information on the computer resources (type of compute workers, memory, time of execution) needed to reproduce the experiments?
    \item[] Answer: \answerYes{} 
    \item[] Justification: Section 4 and Appendix A.3 provide sufficient information on the computer resources required to reproduce the experiments.
    \item[] Guidelines:
    \begin{itemize}
        \item The answer NA means that the paper does not include experiments.
        \item The paper should indicate the type of compute workers CPU or GPU, internal cluster, or cloud provider, including relevant memory and storage.
        \item The paper should provide the amount of compute required for each of the individual experimental runs as well as estimate the total compute. 
        \item The paper should disclose whether the full research project required more compute than the experiments reported in the paper (e.g., preliminary or failed experiments that didn't make it into the paper). 
    \end{itemize}
    
\item {\bf Code of ethics}
    \item[] Question: Does the research conducted in the paper conform, in every respect, with the NeurIPS Code of Ethics \url{https://neurips.cc/public/EthicsGuidelines}?
    \item[] Answer: \answerYes{} 
    \item[] Justification: The research fully adheres to the NeurIPS Code of Ethics.
    \item[] Guidelines:
    \begin{itemize}
        \item The answer NA means that the authors have not reviewed the NeurIPS Code of Ethics.
        \item If the authors answer No, they should explain the special circumstances that require a deviation from the Code of Ethics.
        \item The authors should make sure to preserve anonymity (e.g., if there is a special consideration due to laws or regulations in their jurisdiction).
    \end{itemize}

\item {\bf Broader impacts}
    \item[] Question: Does the paper discuss both potential positive societal impacts and negative societal impacts of the work performed?
    \item[] Answer: \answerYes{} 
    \item[] Justification: We report broader impacts in Appendix A.1.
    \item[] Guidelines:
    \begin{itemize}
        \item The answer NA means that there is no societal impact of the work performed.
        \item If the authors answer NA or No, they should explain why their work has no societal impact or why the paper does not address societal impact.
        \item Examples of negative societal impacts include potential malicious or unintended uses (e.g., disinformation, generating fake profiles, surveillance), fairness considerations (e.g., deployment of technologies that could make decisions that unfairly impact specific groups), privacy considerations, and security considerations.
        \item The conference expects that many papers will be foundational research and not tied to particular applications, let alone deployments. However, if there is a direct path to any negative applications, the authors should point it out. For example, it is legitimate to point out that an improvement in the quality of generative models could be used to generate deepfakes for disinformation. On the other hand, it is not needed to point out that a generic algorithm for optimizing neural networks could enable people to train models that generate Deepfakes faster.
        \item The authors should consider possible harms that could arise when the technology is being used as intended and functioning correctly, harms that could arise when the technology is being used as intended but gives incorrect results, and harms following from (intentional or unintentional) misuse of the technology.
        \item If there are negative societal impacts, the authors could also discuss possible mitigation strategies (e.g., gated release of models, providing defenses in addition to attacks, mechanisms for monitoring misuse, mechanisms to monitor how a system learns from feedback over time, improving the efficiency and accessibility of ML).
    \end{itemize}
    
\item {\bf Safeguards}
    \item[] Question: Does the paper describe safeguards that have been put in place for responsible release of data or models that have a high risk for misuse (e.g., pretrained language models, image generators, or scraped datasets)?
    \item[] Answer: \answerNA{} 
    \item[] Justification: This paper does not involve the release of any pretrained models, data, or systems that carry a high risk of misuse.
    \item[] Guidelines:
    \begin{itemize}
        \item The answer NA means that the paper poses no such risks.
        \item Released models that have a high risk for misuse or dual-use should be released with necessary safeguards to allow for controlled use of the model, for example by requiring that users adhere to usage guidelines or restrictions to access the model or implementing safety filters. 
        \item Datasets that have been scraped from the Internet could pose safety risks. The authors should describe how they avoided releasing unsafe images.
        \item We recognize that providing effective safeguards is challenging, and many papers do not require this, but we encourage authors to take this into account and make a best faith effort.
    \end{itemize}

\item {\bf Licenses for existing assets}
    \item[] Question: Are the creators or original owners of assets (e.g., code, data, models), used in the paper, properly credited and are the license and terms of use explicitly mentioned and properly respected?
    \item[] Answer: \answerYes{} 
    \item[] Justification: We will make the code public upon acceptance of the paper, and the models and datasets used are all publicly available, so no licenses are required.
    \item[] Guidelines:
    \begin{itemize}
        \item The answer NA means that the paper does not use existing assets.
        \item The authors should cite the original paper that produced the code package or dataset.
        \item The authors should state which version of the asset is used and, if possible, include a URL.
        \item The name of the license (e.g., CC-BY 4.0) should be included for each asset.
        \item For scraped data from a particular source (e.g., website), the copyright and terms of service of that source should be provided.
        \item If assets are released, the license, copyright information, and terms of use in the package should be provided. For popular datasets, \url{paperswithcode.com/datasets} has curated licenses for some datasets. Their licensing guide can help determine the license of a dataset.
        \item For existing datasets that are re-packaged, both the original license and the license of the derived asset (if it has changed) should be provided.
        \item If this information is not available online, the authors are encouraged to reach out to the asset's creators.
    \end{itemize}

\item {\bf New assets}
    \item[] Question: Are new assets introduced in the paper well documented and is the documentation provided alongside the assets?
    \item[] Answer: \answerNA{} 
    \item[] Justification: This paper does not introduce new assets.
    \item[] Guidelines
    \begin{itemize}
        \item The answer NA means that the paper does not release new assets.
        \item Researchers should communicate the details of the dataset/code/model as part of their submissions via structured templates. This includes details about training, license, limitations, etc. 
        \item The paper should discuss whether and how consent was obtained from people whose asset is used.
        \item At submission time, remember to anonymize your assets (if applicable). You can either create an anonymized URL or include an anonymized zip file.
    \end{itemize}

\item {\bf Crowdsourcing and research with human subjects}
    \item[] Question: For crowdsourcing experiments and research with human subjects, does the paper include the full text of instructions given to participants and screenshots, if applicable, as well as details about compensation (if any)? 
    \item[] Answer: \answerNA{} 
    \item[] Justification: This paper does not involve any crowdsourcing or research with human subjects.
    \item[] Guidelines:
    \begin{itemize}
        \item The answer NA means that the paper does not involve crowdsourcing nor research with human subjects.
        \item Including this information in the supplemental material is fine, but if the main contribution of the paper involves human subjects, then as much detail as possible should be included in the main paper. 
        \item According to the NeurIPS Code of Ethics, workers involved in data collection, curation, or other labor should be paid at least the minimum wage in the country of the data collector. 
    \end{itemize}

\item {\bf Institutional review board (IRB) approvals or equivalent for research with human subjects}
    \item[] Question: Does the paper describe potential risks incurred by study participants, whether such risks were disclosed to the subjects, and whether Institutional Review Board (IRB) approvals (or an equivalent approval/review based on the requirements of your country or institution) were obtained?
    \item[] Answer: \answerNA{} 
    \item[] Justification: This paper does not involve any human subjects and therefore does not require IRB approval.
    \item[] Guidelines:
    \begin{itemize}
        \item The answer NA means that the paper does not involve crowdsourcing nor research with human subjects.
        \item Depending on the country in which research is conducted, IRB approval (or equivalent) may be required for any human subjects research. If you obtained IRB approval, you should clearly state this in the paper. 
        \item We recognize that the procedures for this may vary significantly between institutions and locations, and we expect authors to adhere to the NeurIPS Code of Ethics and the guidelines for their institution. 
        \item For initial submissions, do not include any information that would break anonymity (if applicable), such as the institution conducting the review.
    \end{itemize}

\item {\bf Declaration of LLM usage}
    \item[] Question: Does the paper describe the usage of LLMs if it is an important, original, or non-standard component of the core methods in this research? Note that if the LLM is used only for writing, editing, or formatting purposes and does not impact the core methodology, scientific rigorousness, or originality of the research, declaration is not required.
    \item[] Answer: \answerNA{} 
    \item[] Justification: This paper does not use LLMs in any important, original, or non-standard way as part of the core methodology.
    \item[] Guidelines:
    \begin{itemize}
        \item The answer NA means that the core method development in this research does not involve LLMs as any important, original, or non-standard components.
        \item Please refer to our LLM policy (\url{https://neurips.cc/Conferences/2025/LLM}) for what should or should not be described.
    \end{itemize}
\end{enumerate}

\appendix
\clearpage
\onecolumn
\clearpage
\section {Appendix}
The appendix is structured as follows:
\begin{itemize}
\item Section A.1 briefly announce broader impacts of this study.
\item Section A.2 provides problem definition, assumptions, and proofs of our theoretical analysis.
\item Section A.3 presents detailed experimental settings.
\item Section A.4 presents additional experimental results and analyses.
\end{itemize} 

\subsection {Broader Impacts} \label{sec:broader}
We do not anticipate any negative societal impact from our research. The proposed method accelerates neural network training in the context of Federated Learning. Faster training implies that target model accuracy can be achieved with fewer training iterations. As a result, it contributes to lower power consumption and a reduced carbon footprint.

\subsection {Theoretical Analysis}
We consider non-convex and smooth optimization problems as follows.
\label{subsec:proof}
\begin{align}
    \min_{x \in \mathbb{R}^d} F(x) := \frac{1}{m} \sum_{i=1}^m F_i(x),
\end{align}
where $F_i(x) = \mathbb{E}_{\xi_i \sim D_i}[f(x, \xi_i)]$ is the local loss function associated with the local data distribution $D_i$ of client $i$ and $m$ is the number of clients.

Our analysis is based on the following assumptions.

\textbf{Assumption 1.} \textit{(Lipschitz continuity) There exists a constant $\mathcal{L}>0$, such that $\| \nabla F_i(x) - \nabla F_i(y) \| \leq \mathcal{L} \| x-y \|, \forall x,y \in \mathbb{R}^d, \mathrm{ and } i\in[m]$.}

\textbf{Assumption 2.} \textit{(Unbiased local gradients) The local gradient estimator is unbiased such that $\mathbb{E}_{\xi_i \sim D_i}[\nabla f(x, \xi_i)] = \nabla F_i(x), \forall i \in [m]$.}

\textbf{Assumption 3.} \textit{(Bounded local and global variance) There exist two constants $\sigma_L > 0$ and $\sigma_G > 0$, such that the local gradient variance is bounded by $\mathbb{E}[\| \nabla f(x, \xi_i) - \nabla F_i(x) \|]^2 \leq \sigma_L^2, \forall i \in [m]$, and the global variability is bounded by $\mathbb{E}\left[ \| \nabla F_i(x) - \nabla F(x) \|^2 \right] \leq \sigma_G^2, \forall i \in [m]$.}

Herein, we analyze the convergence properties of FedAvg as follows.
First, the following Lemma is a slightly refined version of Lemma 3 in \cite{reddi2020adaptive}. This Lemma is also used as Lemma 2 in \cite{yang2021achieving}.

\begin{lemma}\label{lemma:discrepancy}
(model discrepancy) For any step-size satisfying $\eta \leq \frac{1}{2\sqrt{3} L\tau}$, we have the following result:
\begin{align}
    \frac{1}{m} \sum_{i=0}^{m} \mathbb{E}[\| \mathbf{x}_{t,k}^{i} - \mathbf{x}_t \|^2] &\leq 5\eta^2 \sigma_L^2 + 30\tau \eta^2 \sigma_G^2 + 30\tau \eta^2 \mathbb{E}[ \| \nabla F(\mathbf{x}_t) \|^2]. \nonumber
\end{align}
\end{lemma}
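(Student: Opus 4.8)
The plan is to follow the standard local-SGD client-drift argument (the statement is billed as a refinement of Lemma 3 of \cite{reddi2020adaptive}) and to close a self-referential recursion using the stated step-size restriction. First I would make the local trajectory explicit: since each client initializes $\mathbf{x}_{t,0}^{i} = \mathbf{x}_t$ and runs the steps $\mathbf{x}_{t,j+1}^{i} = \mathbf{x}_{t,j}^{i} - \eta \nabla f(\mathbf{x}_{t,j}^{i}, \xi_j^i)$, the drift telescopes to $\mathbf{x}_{t,k}^{i} - \mathbf{x}_t = -\eta \sum_{j=0}^{k-1} \nabla f(\mathbf{x}_{t,j}^{i}, \xi_j^i)$. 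Squaring, taking expectations, and splitting each stochastic gradient into its conditional mean $\nabla F_i(\mathbf{x}_{t,j}^{i})$ plus a zero-mean fluctuation, Assumption 2 kills the cross terms and Assumption 3 bounds the fluctuation, so the stochastic noise contributes at most an $\eta^2 \sigma_L^2$ term per step.

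Next I would control the deterministic part $\| \sum_{j} \nabla F_i(\mathbf{x}_{t,j}^{i}) \|^2$. The key algebraic move is to add and subtract $\nabla F_i(\mathbf{x}_t)$ and $\nabla F(\mathbf{x}_t)$ and apply a relaxed triangle inequality $\| a + b + c \|^2 \le 3(\|a\|^2 + \|b\|^2 + \|c\|^2)$, so that each piece maps onto a quantity I can bound: the first difference is handled by $\mathcal{L}$-Lipschitz continuity (Assumption 1), which crucially reintroduces the drift $\| \mathbf{x}_{t,j}^{i} - \mathbf{x}_t \|^2$; the second by the global-variance bound $\sigma_G^2$ (Assumption 3); and the last leaves the global gradient norm $\| \nabla F(\mathbf{x}_t) \|^2$. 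Averaging over the $m$ clients then produces a one-step recursion for $D_k := \frac{1}{m}\sum_i \mathbb{E}\| \mathbf{x}_{t,k}^{i} - \mathbf{x}_t \|^2$ of the form $D_k \le (1 + c)\, D_{k-1} + (\text{constants in } \sigma_L^2,\ \sigma_G^2,\ \| \nabla F(\mathbf{x}_t)\|^2)$.

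The crux, and the step I expect to be the main obstacle, is closing this recursion, since the Lipschitz term makes the bound self-referential in the drift. Here the step-size condition $\eta \le \frac{1}{2\sqrt{3}\mathcal{L}\tau}$ does the essential work: it forces the amplification factor (of order $6\tau \eta^2 \mathcal{L}^2 \le \frac{1}{2\tau}$) to be small enough that the geometric sum over the at most $\tau$ local iterations stays bounded by an absolute constant instead of growing with $\tau$. Unrolling the recursion over $k \le \tau$ steps and collapsing the geometric factor then yields the coefficients $5$, $30\tau$, and $30\tau$ in front of $\sigma_L^2$, $\sigma_G^2$, and $\| \nabla F(\mathbf{x}_t) \|^2$. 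Because every bound above is derived uniformly in $i$, the final client average commutes with all the preceding steps and needs no extra argument. The only genuinely delicate part is carefully tracking the numerical constants through the geometric sum so that they collapse to the stated form; conceptually this is routine once the step-size guarantees that the amplification factor is bounded.
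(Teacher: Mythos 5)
Your proposal is correct and follows essentially the same route as the paper's proof: decompose each stochastic gradient into its zero-mean fluctuation (bounded by $\sigma_L^2$), a Lipschitz difference reintroducing the drift, a heterogeneity term ($\sigma_G^2$), and the global gradient; obtain the one-step recursion $D_k \le (1 + \tfrac{1}{2\tau-1} + 6\tau\eta^2\mathcal{L}^2)D_{k-1} + \eta^2\sigma_L^2 + 6\tau\eta^2\sigma_G^2 + 6\tau\eta^2\|\nabla F(\mathbf{x}_t)\|^2$ via the weighted Young inequality and $\|a+b+c\|^2 \le 3(\|a\|^2+\|b\|^2+\|c\|^2)$; and close it using $\eta \le \tfrac{1}{2\sqrt{3}\mathcal{L}\tau}$ so that the growth factor is $1+\tfrac{1}{\tau-1}$ and the geometric sum $(\tau-1)\bigl((1+\tfrac{1}{\tau-1})^{\tau}-1\bigr)$ stays below $5$. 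The only cosmetic difference is that you telescope the local updates before squaring while the paper works with the incremental recursion directly, which amounts to the same computation.
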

\begin{proof}
For any client $i \in [m]$, $t \in [T-1]$, and $k \in [\tau]$, we have
\begin{align}
    &\mathbb{E}[\| \mathbf{x}_{t,k}^{i} - \mathbf{x}_t \|^2] = \mathbb{E}[ \| \mathbf{x}_{t,k-1}^{i} - \mathbf{x}_t -\eta g_{t,k-1}^{i} \|^2 ] \nonumber \\
    &= \mathbb{E}[\| \mathbf{x}_{t,k-1}^{i} - \mathbf{x}_t -\eta \left( g_{t,k-1}^{i} - \nabla F_i(\mathbf{x}_{t,k-1}^{i}) + \nabla F_i(\mathbf{x}_{t,k-1}^{i}) - \nabla F_i(\mathbf{x}_t) + \nabla F_i(\mathbf{x}_t) - \nabla F(\mathbf{x}_t) + \nabla F(\mathbf{x}_t) \right) \|^2] \nonumber \\
    &= \mathbb{E}[\| \eta \left( g_{t,k-1}^{i} - \nabla F_i(\mathbf{x}_{t,k-1}^{i}) \right) \|^2] \nonumber \\
    &\quad + 2 \mathbb{E}[\langle \eta \left( g_{t,k-1}^{i} - \nabla F_i(\mathbf{x}_{t,k-1}^{i}) \right), \mathbf{x}_{t,k-1}^{i} - \mathbf{x}_t -\eta \left( \nabla F_i(\mathbf{x}_{t,k-1}^{i}) - \nabla F_i(\mathbf{x}_t) + \nabla F_i(\mathbf{x}_t) - \nabla F(\mathbf{x}_t) + \nabla F(\mathbf{x}_t) \right) \rangle] \nonumber \\
    &\quad + \mathbb{E}[ \| \mathbf{x}_{t,k-1}^{i} - \mathbf{x}_t -\eta \left( \nabla F_i(\mathbf{x}_{t,k-1}^{i}) - \nabla F_i(\mathbf{x}_t) + \nabla F_i(\mathbf{x}_t) - \nabla F(\mathbf{x}_t) + \nabla F(\mathbf{x}_t) \right) \|^2] \nonumber \\
    &= \mathbb{E}[\| \eta \left( g_{t,k-1}^{i} - \nabla F_i(\mathbf{x}_{t,k-1}^{i}) \right) \|^2] \label{eq:meanzero} \\
    &\quad + \mathbb{E}[ \| \mathbf{x}_{t,k-1}^{i} - \mathbf{x}_t -\eta \left( \nabla F_i(\mathbf{x}_{t,k-1}^{i}) - \nabla F_i(\mathbf{x}_t) + \nabla F_i(\mathbf{x}_t) - \nabla F(\mathbf{x}_t) + \nabla F(\mathbf{x}_t) \right) \|^2] \nonumber \\
    &\leq \eta^2 \sigma_L^2 + \mathbb{E}[ \| \mathbf{x}_{t,k-1}^{i} - \mathbf{x}_t -\eta \left( \nabla F_i(\mathbf{x}_{t,k-1}^{i}) - \nabla F_i(\mathbf{x}_t) + \nabla F_i(\mathbf{x}_t) - \nabla F(\mathbf{x}_t) + \nabla F(\mathbf{x}_t) \right) \|^2] \nonumber \\
    &\leq \eta^2 \sigma_L^2 + \left(1 + \frac{1}{2\tau - 1} \right) \mathbb{E}[ \| \mathbf{x}_{t,k-1}^{i} - \mathbf{x}_t \|^2 ] \label{eq:split} \\
    &\quad + 2\tau \eta^2 \mathbb{E}[ \| \nabla F_i(\mathbf{x}_{t,k-1}^{i}) - \nabla F_i(\mathbf{x}_t) + \nabla F_i(\mathbf{x}_t) - \nabla F(\mathbf{x}_t) + \nabla F(\mathbf{x}_t) \|^2] \nonumber \\
    &\leq \eta^2 \sigma_L^2 + \left(1 + \frac{1}{2\tau - 1} \right) \mathbb{E}[ \| \mathbf{x}_{t,k-1}^{i} - \mathbf{x}_t \|^2 ] \nonumber \\
    &\quad + 6\tau \eta^2 \left( \mathbb{E}[ \| \nabla F_i(\mathbf{x}_{t,k-1}^{i}) - \nabla F_i(\mathbf{x}_t) \|^2] + \mathbb{E}[ \| \nabla F_i(\mathbf{x}_t) - \nabla F(\mathbf{x}_t) \|^2] + \mathbb{E}[ \| \nabla F(\mathbf{x}_t) \|^2] \right)  \nonumber \\
    &\leq \eta^2 \sigma_L^2 + \left(1 + \frac{1}{2\tau - 1} \right) \mathbb{E}[ \| \mathbf{x}_{t,k-1}^{i} - \mathbf{x}_t \|^2 ] \nonumber \\
    &\quad + 6\tau \eta^2 \sigma_G^2 + 6\tau \eta^2 L^2 \mathbb{E}[ \| \mathbf{x}_{t,k-1}^{i} - \mathbf{x}_t \|^2] + 6\tau \eta^2 \mathbb{E}[ \| \nabla F(\mathbf{x}_t) \|^2] \nonumber \\
    &= \eta^2 \sigma_L^2 + 6\tau \eta^2 \sigma_G^2 + \left(1 + \frac{1}{2\tau - 1} + 6\tau \eta^2 L^2 \right) \mathbb{E}[ \| \mathbf{x}_{t,k-1}^{i} - \mathbf{x}_t \|^2 ] + 6\tau \eta^2 \mathbb{E}[ \| \nabla F(\mathbf{x}_t) \|^2], \nonumber
\end{align}    
where (\ref{eq:meanzero}) is because $\mathbb{E}[g_{t,k-1}^{i}] = \nabla F_i(\mathbf{x}_{t,k}^{i})$.
The (\ref{eq:split}) is based on the fact that
\begin{align}
    \| \mathbf{a} + \mathbf{b} \|^2 \leq (1 + \frac{1}{\alpha}) \| \mathbf{a} \|^2 + (1 + \alpha) \| \mathbf{b} \|^2 \nonumber
\end{align}
for any $\alpha > 0$.

Next, if $\eta \leq \frac{1}{2\sqrt{3} L\tau}$, the above bound can be simplified as follows.
\begin{align}
    &\mathbb{E}[\| \mathbf{x}_{t,k}^{i} - \mathbf{x}_t \|^2] \nonumber \\
    &\leq \eta^2 \sigma_L^2 + 6\tau \eta^2 \sigma_G^2 + \left(1 + \frac{1}{2\tau - 1} + 6\tau \eta^2 L^2 \right) \mathbb{E}[ \| \mathbf{x}_{t,k-1}^{i} - \mathbf{x}_t \|^2 ] + 6\tau \eta^2 \mathbb{E}[ \| \nabla F(\mathbf{x}_t) \|^2] \nonumber \\
    &\leq \eta^2 \sigma_L^2 + 6\tau \eta^2 \sigma_G^2 + \left(1 + \frac{1}{\tau - 1} \right) \mathbb{E}[ \| \mathbf{x}_{t,k-1}^{i} - \mathbf{x}_t \|^2 ] + 6\tau \eta^2 \mathbb{E}[ \| \nabla F(\mathbf{x}_t) \|^2]. \nonumber
\end{align}

Then, by unrolling the recursion until $k-1$ goes to $0$, we have
\begin{align}
    \mathbb{E}[\| \mathbf{x}_{t,k}^{i} - \mathbf{x}_t \|^2] &\leq  \sum_{j=0}^{k-1} \left(1 + \frac{1}{\tau - 1} \right)^j \left( \eta^2 \sigma_L^2 + 6\tau \eta^2 \sigma_G^2 + 6\tau \eta^2 \mathbb{E}[ \| \nabla F(\mathbf{x}_t) \|^2] \right) \nonumber \\
    &\leq (\tau - 1)((1+\frac{1}{\tau - 1})^{k-1} - 1) \left( \eta^2 \sigma_L^2 + 6\tau \eta^2 \sigma_G^2 + 6\tau \eta^2 \mathbb{E}[ \| \nabla F(\mathbf{x}_t) \|^2] \right) \nonumber \\
    &\leq (\tau - 1)((1+\frac{1}{\tau - 1})^{\tau} - 1) \left( \eta^2 \sigma_L^2 + 6\tau \eta^2 \sigma_G^2 + 6\tau \eta^2 \mathbb{E}[ \| \nabla F(\mathbf{x}_t) \|^2] \right) \nonumber \\
    &\leq 5\tau\eta^2 \sigma_L^2 + 30\tau^2\eta^2 \sigma_G^2 + 30\tau^2\eta^2 \mathbb{E}[ \| \nabla F(\mathbf{x}_t) \|^2]. \label{eq:5}
\end{align}
where (\ref{eq:5}) is because that the maximum value of $(\tau - 1)((1 + \frac{1}{\tau - 1})^\tau - 1)$ is $\frac{19}{4}$ when $\tau = 3$.
Finally, because the right-hand side of (\ref{eq:5}) is independent of $m$, we have
\begin{align}
    \frac{1}{m} \sum_{i=0}^{m} \mathbb{E}[\| \mathbf{x}_{t,k}^{i} - \mathbf{x}_t \|^2] &\leq 5\tau\eta^2 \sigma_L^2 + 30\tau^2\eta^2 \sigma_G^2 + 30\tau^2\eta^2 \mathbb{E}[ \| \nabla F(\mathbf{x}_t) \|^2]. \nonumber
\end{align}
\end{proof}

Based on the proposed update recycling method, the noise $n_t$ is defined as follows.
\begin{align}
    n_t = \frac{1}{m} \sum_{i=1}^{m} \sum_{j=0}^{\tau - 1} \left( \hat{g}_{t-k,j}^{i} - \hat{g}_{t,j}^{i} \right), \nonumber    
\end{align}
where $\hat{g}$ indicates the gradient vector that has non-zero gradients only at the layers where their updates will be recycled.

\begin{lemma}
(noise) Under assumption $1 \sim 3$, if the learning rate $\eta \leq \frac{1}{\mathcal{L}\tau}$, the accumulated noise is bounded as follows.
\begin{align}
    \sum_{t=0}^{T-1} \mathbb{E} \left[ \left\| n_t \right\|^2 \right] &\leq 4T\tau^2\sigma_L^2 + 8T\tau^2 \sigma_G^2 + 8\kappa\tau^2 \sum_{t=0}^{T-1} \mathbb{E} \left[ \left\| \nabla F(\mathbf{x}_{t}) \right\|^2 \right] \nonumber \\
    & \qquad \qquad + \frac{8\tau L^2}{m} \sum_{t=0}^{T-1} \sum_{i=1}^{m} \sum_{j=0}^{\tau-1} \mathbb{E} \left[ \left\|  \mathbf{x}_{t,j}^{i} - \mathbf{x}_{t} \right\|^2 \right],
\end{align}
where $\kappa$ is the ratio of $\| \nabla \hat{F}(\mathbf{x}_t) \|^2$ to $\| \nabla F(\mathbf{x}_t) \|^2$.
\end{lemma}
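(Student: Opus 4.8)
The plan is to start from the definition of $n_t$ in (\ref{eq:noise}) and reduce the expected squared norm of an average-of-sums to a sum of per-client, per-iteration squared norms. First I would apply convexity of $\|\cdot\|^2$ (Jensen) to the average over the $m$ clients and then the Cauchy--Schwarz inequality to the inner sum over the $\tau$ local steps, which yields
\begin{align}
\mathbb{E}\left[\|n_t\|^2\right] \leq \frac{\tau}{m} \sum_{i=1}^{m} \sum_{j=0}^{\tau-1} \mathbb{E}\left[\left\|\hat{g}_{t-k,j}^{i} - \hat{g}_{t,j}^{i}\right\|^2\right]. \nonumber
\end{align}
This isolates the staleness of a single recycled stochastic gradient against its fresh counterpart and extracts the factor $\tau$ (and the $1/m$ average) that drives the $\tau^2$ scaling in the statement.

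Second, I would decompose each difference $\hat{g}_{t-k,j}^{i} - \hat{g}_{t,j}^{i}$ by adding and subtracting the corresponding full-batch quantities restricted to the recycling layers: the local gradients $\nabla\hat{F}_i(\mathbf{x}_{t,j}^{i})$ at the iterates, the local gradients $\nabla\hat{F}_i(\mathbf{x}_t)$ at the round anchors, and the global recycling gradients $\nabla\hat{F}(\mathbf{x}_t)$, together with their stale analogues at round $t-k$. Grouping with repeated use of $\|a+b\|^2 \leq 2\|a\|^2 + 2\|b\|^2$, I would bound the four resulting families separately: (i) the zero-mean stochastic deviations $\hat{g}_{t,j}^{i} - \nabla\hat{F}_i(\mathbf{x}_{t,j}^{i})$ are controlled by $\sigma_L^2$ via Assumption 3, with cross terms eliminated through the unbiasedness of Assumption 2; (ii) the anchor discrepancies $\nabla\hat{F}_i(\mathbf{x}_t) - \nabla\hat{F}(\mathbf{x}_t)$ are controlled by $\sigma_G^2$; (iii) the global recycling gradients contribute $\kappa\|\nabla F(\mathbf{x}_t)\|^2$ by the definition of $\kappa$; and (iv) the within-round drifts $\nabla\hat{F}_i(\mathbf{x}_{t,j}^{i}) - \nabla\hat{F}_i(\mathbf{x}_t)$ are controlled by $\mathcal{L}^2\|\mathbf{x}_{t,j}^{i} - \mathbf{x}_t\|^2$ through the Lipschitz continuity of Assumption 1. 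The number of times the crude splitting is applied to the noise versus the remaining terms is what produces the asymmetric constants ($4$ on $\sigma_L^2$, $8$ elsewhere).

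Third, I would sum the per-round inequality over $t = 0, \dots, T-1$. The terms anchored at the fresh round $t$ directly produce the fresh halves of $8\kappa\tau^2\sum_t\mathbb{E}\|\nabla F(\mathbf{x}_t)\|^2$ and $\frac{8\tau\mathcal{L}^2}{m}\sum_{t,i,j}\mathbb{E}\|\mathbf{x}_{t,j}^i - \mathbf{x}_t\|^2$, while the variance terms accumulate to $4T\tau^2\sigma_L^2 + 8T\tau^2\sigma_G^2$. The learning-rate condition $\eta \leq \frac{1}{\mathcal{L}\tau}$ keeps the drift coefficients in the admissible regime so the bound stated in the lemma is recovered.

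The hard part will be the terms anchored at the stale round $t-k$: because the method imposes no upper bound on $k$ and a single fresh update may be recycled over many consecutive rounds, the round-$(t-k)$ gradient norms and drifts cannot be bounded by a fixed number of local steps, and the same source round can reappear for several values of $t$. My plan to close this is to exploit that the recycled gradient lives only on the selected layers, so its global-gradient component carries the factor $\kappa$, and then to re-index the outer sum so that each stale contribution is mapped back onto a current-round quantity; this absorbs the stale-round gradient and drift sums into $\sum_t\mathbb{E}\|\nabla F(\mathbf{x}_t)\|^2$ and $\sum_{t,i,j}\mathbb{E}\|\mathbf{x}_{t,j}^i-\mathbf{x}_t\|^2$, doubling their coefficients to the $8$ appearing in the statement. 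Making this absorption rigorous with no assumption on $k$ is the crux; everything else is a routine application of Assumptions 1--3.
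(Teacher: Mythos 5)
Your plan follows essentially the same route as the paper's proof: the same Jensen/Cauchy--Schwarz reduction of $\mathbb{E}[\|n_t\|^2]$ to per-client, per-step squared norms, the same four-way decomposition via Assumptions 1--3 and the definition of $\kappa$ yielding the constants $4\tau^2\sigma_L^2$, $8\tau^2\sigma_G^2$, $8\kappa\tau^2$, and $\frac{8\tau \mathcal{L}^2}{m}$, and the same absorption of the stale round-$(t-k)$ terms into current-round sums after summing over $t$. The re-indexing step you flag as the crux is exactly where the paper passes from $4\tau^2\sum_t\mathbb{E}[\|\nabla\hat F(\mathbf{x}_{t-k})\|^2]+4\tau^2\sum_t\mathbb{E}[\|\nabla\hat F(\mathbf{x}_t)\|^2]$ to $8\tau^2\sum_t\mathbb{E}[\|\nabla\hat F(\mathbf{x}_t)\|^2]$ (and likewise for the drift terms), asserting it without further justification, so your proposal is faithful to the paper's argument including its one delicate point.
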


\begin{proof}
\begin{align}
    \mathbb{E} \left[ \left\| n_t \right\|^2 \right] &= \mathbb{E} \left[ \left\| \frac{1}{m} \sum_{i=1}^{m} \sum_{j=0}^{\tau - 1} \left(\hat{g}_{t-k,j}^{i} - \hat{g}_{t,j}^{i} \right) \right\|^2 \right] \nonumber \\
    &\leq 2\mathbb{E} \left[ \left\| \frac{1}{m} \sum_{i=1}^{m} \sum_{j=0}^{\tau - 1} \hat{g}_{t-k,j}^{i} \right\|^2 \right] + 2\mathbb{E} \left[ \left\| \frac{1}{m} \sum_{i=1}^{m} \sum_{j=0}^{\tau - 1} \hat{g}_{t,j}^{i} \right\|^2 \right] \nonumber \\
    &\leq \frac{2\tau}{m} \sum_{i=1}^{m} \sum_{j=0}^{\tau - 1} \mathbb{E} \left[ \left\| \hat{g}_{t-k,j}^{i} \right\|^2 \right] + \frac{2\tau}{m} \sum_{i=1}^{m} \sum_{j=0}^{\tau - 1} \mathbb{E} \left[ \left\| \hat{g}_{t,j}^{i} \right\|^2 \right] \nonumber \\
    &= \frac{2\tau}{m} \sum_{i=1}^{m} \sum_{j=0}^{\tau - 1} \mathbb{E} \left[ \left\| \hat{g}_{t-k,j}^{i} - \nabla \hat{F}_i(\mathbf{x}_{t-k,j}^{i}) + \nabla \hat{F}_i(\mathbf{x}_{t-k,j}^{i}) \right\|^2 \right] \nonumber \\
    & \quad + \frac{2\tau}{m} \sum_{i=1}^{m} \sum_{j=0}^{\tau - 1} \mathbb{E} \left[ \left\| \hat{g}_{t,j}^{i} - \nabla \hat{F}_i(\mathbf{x}_{t,j}^{i}) + \nabla \hat{F}_i(\mathbf{x}_{t,j}^{i}) \right\|^2 \right] \nonumber \\
    &= \frac{2\tau}{m} \sum_{i=1}^{m} \sum_{j=0}^{\tau - 1} \left( \mathbb{E} \left[ \left\| \hat{g}_{t-k,j}^{i} - \nabla \hat{F}_i(\mathbf{x}_{t-k,j}^{i}) \right\|^2 \right] + \mathbb{E} \left[ \left\| \nabla \hat{F}_i(\mathbf{x}_{t-k,j}^{i}) \right\|^2 \right] \right) \label{eq:expect} \\
    &\quad + \frac{2\tau}{m} \sum_{i=1}^{m} \sum_{j=0}^{\tau - 1} \left( \mathbb{E} \left[ \left\| \hat{g}_{t,j}^{i} - \nabla \hat{F}_i(\mathbf{x}_{t,j}^{i}) \right\|^2 \right] + \mathbb{E} \left[ \left\| \nabla \hat{F}_i(\mathbf{x}_{t,j}^{i}) \right\|^2 \right] \right) \nonumber \\
    &\leq 2\tau^2 \sigma_L^2 + \frac{2\tau}{m} \sum_{i=1}^{m} \sum_{j=0}^{\tau - 1} \mathbb{E} \left[ \left\| \nabla \hat{F}_i(\mathbf{x}_{t-k,j}^{i}) \right\|^2 \right] + 2\tau^2 \sigma_L^2 + \frac{2\tau}{m} \sum_{i=1}^{m} \sum_{j=0}^{\tau - 1} \mathbb{E} \left[ \left\| \nabla \hat{F}_i(\mathbf{x}_{t,j}^{i}) \right\|^2 \right]. \nonumber
\end{align}
where (\ref{eq:expect}) is based on the fact that $\mathbb{E}[\| \mathbf{x} \|^2] = \mathbb{E}[\| \mathbf{x} - \mathbb{E}[\mathbf{x}] \|^2] + \| \mathbb{E}[ \mathbf{x}] \|^2$.
Then, the right-hand side can be further bounded as follows.
\begin{align}
    \mathbb{E} \left[ \left\| n_t \right\|^2 \right] &\leq 4\tau^2\sigma_L^2 + \frac{2\tau}{m} \sum_{i=1}^{m} \sum_{j=0}^{\tau-1} \mathbb{E} \left[ \left\|  \nabla \hat{F}_i(\mathbf{x}_{t-k,j}^{i}) \right\|^2 \right] + \frac{2\tau}{m} \sum_{i=1}^{m} \sum_{j=0}^{\tau-1} \mathbb{E} \left[ \left\|  \nabla \hat{F}_i(\mathbf{x}_{t,j}^{i}) \right\|^2 \right] \nonumber \\
    &= 4\tau^2\sigma_L^2 + \frac{2\tau}{m} \sum_{i=1}^{m} \sum_{j=0}^{\tau-1} \mathbb{E} \left[ \left\|  \nabla \hat{F}_i(\mathbf{x}_{t-k,j}^{i}) - \nabla \hat{F}_i(\mathbf{x}_{t-k}) + \nabla \hat{F}_i(\mathbf{x}_{t-k}) \right\|^2 \right] \nonumber \\
    &\quad + \frac{2\tau}{m} \sum_{i=1}^{m} \sum_{j=0}^{\tau-1} \mathbb{E} \left[ \left\|  \nabla \hat{F}_i(\mathbf{x}_{t,j}^{i}) - \nabla \hat{F}_i(\mathbf{x}_{t}) + \nabla \hat{F}_i(\mathbf{x}_{t}) \right\|^2 \right] \nonumber \\
    &\leq 4\tau^2\sigma_L^2 + \frac{4\tau}{m} \sum_{i=1}^{m} \sum_{j=0}^{\tau-1} \mathbb{E} \left[ \left\|  \nabla \hat{F}_i(\mathbf{x}_{t-k,j}^{i}) - \nabla \hat{F}_i(\mathbf{x}_{t-k}) \right\|^2 \right] + \frac{4\tau}{m} \sum_{i=1}^{m} \sum_{j=0}^{\tau-1} \mathbb{E} \left[ \left\| \nabla \hat{F}_i(\mathbf{x}_{t-k}) \right\|^2 \right] \nonumber \\
    &\quad + \frac{4\tau}{m} \sum_{i=1}^{m} \sum_{j=0}^{\tau-1} \mathbb{E} \left[ \left\|  \nabla \hat{F}_i(\mathbf{x}_{t,j}^{i}) - \nabla \hat{F}_i(\mathbf{x}_{t}) \right\|^2 \right] + \frac{4\tau}{m} \sum_{i=1}^{m} \sum_{j=0}^{\tau-1} \mathbb{E} \left[ \left\| \nabla \hat{F}_i(\mathbf{x}_{t}) \right\|^2 \right] \nonumber \\
    &\leq 4\tau^2\sigma_L^2 + \frac{4\tau L^2}{m} \sum_{i=1}^{m} \sum_{j=0}^{\tau-1} \mathbb{E} \left[ \left\|  \mathbf{x}_{t-k,j}^{i} - \mathbf{x}_{t-k} \right\|^2 \right] + \frac{4\tau}{m} \sum_{i=1}^{m} \sum_{j=0}^{\tau-1} \mathbb{E} \left[ \left\| \nabla \hat{F}_i(\mathbf{x}_{t-k}) \right\|^2 \right] \nonumber \\
    &\quad + \frac{4\tau L^2}{m} \sum_{i=1}^{m} \sum_{j=0}^{\tau-1} \mathbb{E} \left[ \left\|  \mathbf{x}_{t,j}^{i} - \mathbf{x}_{t} \right\|^2 \right] + \frac{4\tau}{m} \sum_{i=1}^{m} \sum_{j=0}^{\tau-1} \mathbb{E} \left[ \left\| \nabla \hat{F}_i(\mathbf{x}_{t}) \right\|^2 \right] \nonumber \\
    &= 4\tau^2\sigma_L^2 + \frac{4\tau L^2}{m} \sum_{i=1}^{m} \sum_{j=0}^{\tau-1} \mathbb{E} \left[ \left\|  \mathbf{x}_{t-k,j}^{i} - \mathbf{x}_{t-k} \right\|^2 \right] + \frac{4\tau L^2}{m} \sum_{i=1}^{m} \sum_{j=0}^{\tau-1} \mathbb{E} \left[ \left\|  \mathbf{x}_{t,j}^{i} - \mathbf{x}_{t} \right\|^2 \right] \nonumber \\
    &\quad  + \frac{4\tau}{m} \sum_{i=1}^{m} \sum_{j=0}^{\tau-1} \mathbb{E} \left[ \left\| \nabla\hat{F}_i(\mathbf{x}_{t-k}) - \nabla \hat{F}(\mathbf{x}_{t-k}) + \nabla \hat{F}(\mathbf{x}_{t-k}) \right\|^2 \right] \nonumber \\
    &\quad + \frac{4\tau}{m} \sum_{i=1}^{m} \sum_{j=0}^{\tau-1} \mathbb{E} \left[ \left\| \nabla \hat{F}_i(\mathbf{x}_{t}) - \nabla \hat{F}(\mathbf{x}_{t}) + \nabla \hat{F}(\mathbf{x}_{t}) \right\|^2 \right] \nonumber \\
    &\leq 4\tau^2\sigma_L^2 + \frac{4\tau L^2}{m} \sum_{i=1}^{m} \sum_{j=0}^{\tau-1} \mathbb{E} \left[ \left\|  \mathbf{x}_{t-k,j}^{i} - \mathbf{x}_{t-k} \right\|^2 \right] + \frac{4\tau L^2}{m} \sum_{i=1}^{m} \sum_{j=0}^{\tau-1} \mathbb{E} \left[ \left\|  \mathbf{x}_{t,j}^{i} - \mathbf{x}_{t} \right\|^2 \right] \nonumber \\
    &\quad  + \frac{4\tau}{m} \sum_{i=1}^{m} \sum_{j=0}^{\tau-1} \mathbb{E} \left[ \left\| \nabla\hat{F}_i(\mathbf{x}_{t-k}) - \nabla \hat{F}(\mathbf{x}_{t-k}) \right\|^2 \right] + \frac{4\tau}{m} \sum_{i=1}^{m} \sum_{j=0}^{\tau-1} \mathbb{E} \left[ \left\| \nabla \hat{F}(\mathbf{x}_{t-k}) \right\|^2 \right] \nonumber \\
    &\quad + \frac{4\tau}{m} \sum_{i=1}^{m} \sum_{j=0}^{\tau-1} \mathbb{E} \left[ \left\| \nabla \hat{F}_i(\mathbf{x}_{t}) - \nabla \hat{F}(\mathbf{x}_{t}) \right\|^2 \right] + \frac{4\tau}{m} \sum_{i=1}^{m} \sum_{j=0}^{\tau-1} \mathbb{E} \left[ \left\| \nabla \hat{F}(\mathbf{x}_{t}) \right\|^2 \right] \nonumber \\
    &= 4\tau^2\sigma_L^2 + 8\tau^2 \sigma_G^2 + 4\tau^2 \mathbb{E} \left[ \left\| \nabla \hat{F}(\mathbf{x}_{t-k}) \right\|^2 \right] + 4\tau^2 \mathbb{E} \left[ \left\| \nabla \hat{F}(\mathbf{x}_{t}) \right\|^2 \right] \label{eq:partial} \\
    &\quad + \frac{4\tau L^2}{m} \sum_{i=1}^{m} \sum_{j=0}^{\tau-1} \mathbb{E} \left[ \left\|  \mathbf{x}_{t-k,j}^{i} - \mathbf{x}_{t-k} \right\|^2 \right] + \frac{4\tau L^2}{m} \sum_{i=1}^{m} \sum_{j=0}^{\tau-1} \mathbb{E} \left[ \left\|  \mathbf{x}_{t,j}^{i} - \mathbf{x}_{t} \right\|^2 \right], \nonumber
\end{align}
where (\ref{eq:partial}) follows $\| \nabla \hat{F}(\cdot) \|^2 \leq \| \nabla F(\cdot) \|^2$.
By summing up $\mathbb{E}[\| n_t \|^2]$ across $T$ rounds, we have
\begin{align}
    \sum_{t=0}^{T-1} \mathbb{E} \left[ \left\| n_t \right\|^2 \right] & \leq 4T\tau^2\sigma_L^2 + 8T\tau^2 \sigma_G^2 + 4\tau^2 \sum_{t=0}^{T-1} \mathbb{E} \left[ \left\| \nabla \hat{F}(\mathbf{x}_{t-k}) \right\|^2 \right] + 4\tau^2 \sum_{t=0}^{T-1} \mathbb{E} \left[ \left\| \nabla \hat{F}(\mathbf{x}_{t}) \right\|^2 \right] \nonumber \\
    &\quad + \frac{4\tau L^2}{m} \sum_{t=0}^{T-1} \sum_{i=1}^{m} \sum_{j=0}^{\tau-1} \mathbb{E} \left[ \left\|  \mathbf{x}_{t-k,j}^{i} - \mathbf{x}_{t-k} \right\|^2 \right] + \frac{4\tau L^2}{m} \sum_{t=0}^{T-1} \sum_{i=1}^{m} \sum_{j=0}^{\tau-1} \mathbb{E} \left[ \left\|  \mathbf{x}_{t,j}^{i} - \mathbf{x}_{t} \right\|^2 \right] \nonumber \\
    &\leq 4T\tau^2\sigma_L^2 + 8T\tau^2 \sigma_G^2 + 8\tau^2 \sum_{t=0}^{T-1} \mathbb{E} \left[ \left\| \nabla \hat{F}(\mathbf{x}_{t}) \right\|^2 \right] + \frac{8\tau L^2}{m} \sum_{t=0}^{T-1} \sum_{i=1}^{m} \sum_{j=0}^{\tau-1} \mathbb{E} \left[ \left\|  \mathbf{x}_{t,j}^{i} - \mathbf{x}_{t} \right\|^2 \right] \nonumber \\
    &\leq 4T\tau^2\sigma_L^2 + 8T\tau^2 \sigma_G^2 + 8\kappa \tau^2 \sum_{t=0}^{T-1} \mathbb{E} \left[ \left\| \nabla F(\mathbf{x}_{t}) \right\|^2 \right] + \frac{8\tau L^2}{m} \sum_{t=0}^{T-1} \sum_{i=1}^{m} \sum_{j=0}^{\tau-1} \mathbb{E} \left[ \left\|  \mathbf{x}_{t,j}^{i} - \mathbf{x}_{t} \right\|^2 \right], \nonumber 
\end{align}
where $\kappa$ is the ratio of the recycled update norm to the full update norm.
Because all the gradients at the layers not recycled are zeroed out, the ratio $\kappa$ lies between 0 and 1; $0 < \kappa < 1$.
\end{proof}

\begin{lemma}\label{lemma:framework}
(framework) Under assumption 1 $\sim$ 3, if the learning rate $\eta \leq \frac{1}{\mathcal{L}\tau}$, we have
\begin{align}
    \sum_{t=0}^{T-1} \mathbb{E}\left[\left\| \nabla F(\mathbf{x}_t) \right\|^2\right] &\leq \frac{2}{(1-16\kappa)\eta\tau} \left( F(\mathbf{x}_{0}) - F(\mathbf{x}_T) \right) + \frac{2 T}{1-16\kappa}\left( \frac{\mathcal{L}\eta}{m} + 4 \right) \sigma_L^2 + \frac{16T}{1-16\kappa} \sigma_G^2 \nonumber \\ 
    &\quad + \frac{18 \mathcal{L}^2}{(1 - 16\kappa) m\tau} \sum_{t=0}^{T-1} \sum_{i=1}^{m} \sum_{j=0}^{\tau-1} \mathbb{E}\left[ \left\| \mathbf{x}_{t,j}^{i} - \mathbf{x}_t \right\|^2 \right], \nonumber
\end{align}
where $\kappa$ is the ratio of the norm of the recycling layers' gradients: $\| \nabla \hat{F}(\mathbf{x}_t) \|^2$ to that of the full model gradients: $\| \nabla F(\mathbf{x}_t) \|^2$.
\end{lemma}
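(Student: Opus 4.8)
The plan is to prove this as a one-step descent inequality for the actual iterate $\mathbf{x}_{t+1} = \mathbf{x}_t + \hat{\Delta}_t$, summed over $t$. I would begin from $\mathcal{L}$-smoothness (Assumption 1),
\begin{align}
\mathbb{E}[F(\mathbf{x}_{t+1})] \le F(\mathbf{x}_t) + \mathbb{E}\left[\langle \nabla F(\mathbf{x}_t), \hat{\Delta}_t\rangle\right] + \tfrac{\mathcal{L}}{2}\,\mathbb{E}\left[\|\hat{\Delta}_t\|^2\right], \nonumber
\end{align}
and decompose the realized update as $\hat{\Delta}_t = \Delta_t + n_t$, where $\Delta_t = -\tfrac{\eta}{m}\sum_i\sum_j g^i_{t,j}$ is the fresh averaged update and $n_t$ is the recycling noise of (\ref{eq:noise}). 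The strategy is to squeeze a clean descent term $-\tfrac{\eta\tau}{2}\|\nabla F(\mathbf{x}_t)\|^2$ out of the fresh part, push the local- and global-variance and client-drift contributions into the stated error terms, and route everything touching $n_t$ through Lemma~\ref{lemma:noise}.

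For the inner product with $\Delta_t$, I would take the conditional expectation (so $g^i_{t,j}\mapsto\nabla F_i(\mathbf{x}^i_{t,j})$ by Assumption 2) and apply the identity $\langle a,b\rangle=\tfrac12(\|a\|^2+\|b\|^2-\|a-b\|^2)$ with $a=\nabla F(\mathbf{x}_t)$, summed over $i,j$. This extracts $-\tfrac{\eta\tau}{2}\|\nabla F(\mathbf{x}_t)\|^2$, a negative term $-\tfrac{\eta}{2m}\sum_i\sum_j\|\nabla F_i(\mathbf{x}^i_{t,j})\|^2$, and a remainder $\tfrac{\eta}{2m}\sum_i\sum_j\|\nabla F(\mathbf{x}_t)-\nabla F_i(\mathbf{x}^i_{t,j})\|^2$, which splits (Assumption 3 plus $\mathcal{L}$-Lipschitzness) into an $\eta\tau\sigma_G^2$ piece and an $\mathcal{L}^2$-weighted client-drift sum $\sum_i\sum_j\|\mathbf{x}^i_{t,j}-\mathbf{x}_t\|^2$. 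For the squared-norm term I would use $\|\hat{\Delta}_t\|^2\le 2\|\Delta_t\|^2+2\|n_t\|^2$; expanding $\mathbb{E}\|\Delta_t\|^2$ into its local-variance part $\tfrac{\eta^2\tau\sigma_L^2}{m}$ (the cross terms vanish by unbiasedness) and a gradient-squared part bounded by $\tfrac{\eta^2\tau}{m}\sum_i\sum_j\|\nabla F_i(\mathbf{x}^i_{t,j})\|^2$ via Cauchy--Schwarz. The step-size condition $\eta\le \tfrac{1}{\mathcal{L}\tau}$ is exactly what makes this positive gradient-squared term no larger than the negative one above, so they cancel.

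The decisive step is the treatment of $n_t$. I would keep the effective recycled-layer discrepancy together so the full $-\tfrac{\eta\tau}{2}$ signal is preserved, and bound both $\mathcal{L}\sum_t\mathbb{E}\|n_t\|^2$ and the cross contribution (via a Young's inequality tuned so the descent term is not over-spent) using Lemma~\ref{lemma:noise}. The key is that Lemma~\ref{lemma:noise} produces a term $8\kappa\tau^2\sum_t\mathbb{E}\|\nabla F(\mathbf{x}_t)\|^2$; after multiplication by $\mathcal{L}$, division by the descent coefficient $\tfrac{\eta\tau}{2}$, and simplification with $\mathcal{L}\eta\tau\le 1$, this becomes $16\kappa\sum_t\mathbb{E}\|\nabla F(\mathbf{x}_t)\|^2$. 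I would then telescope $\sum_t(F(\mathbf{x}_t)-F(\mathbf{x}_{t+1}))=F(\mathbf{x}_0)-F(\mathbf{x}_T)$, move the $16\kappa$ gradient term to the left, and divide through by the resulting $(1-16\kappa)$; the $\sigma_L^2$, $\sigma_G^2$, and drift constants of Lemma~\ref{lemma:noise} then assemble into the stated $\left(\tfrac{\mathcal{L}\eta}{m}+4\right)\sigma_L^2$, $16\sigma_G^2$, and combined $18\mathcal{L}^2$ drift coefficients. Throughout, I would keep the client-drift sums symbolic (not yet invoking Lemma~\ref{lemma:discrepancy}), since the main Theorem substitutes that bound afterward.

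The main obstacle is that $n_t$ is a \emph{biased} perturbation with \emph{unbounded} staleness: because the stale and fresh gradients are evaluated at different iterates, $\mathbb{E}[n_t\mid\mathcal{F}_t]\ne 0$, so $n_t$ is not a martingale difference and cannot be discarded, and the delay $k$ has no a priori upper bound. All of this difficulty is concentrated in Lemma~\ref{lemma:noise}, whose essential feature is a bound uniform in $k$ whose only gradient-proportional term scales with $\kappa$; this is precisely what permits the $16\kappa$ absorption while keeping $(1-16\kappa)>0$ once $\kappa<\tfrac{1}{16}$. The secondary delicate point is bookkeeping: ensuring the Young split on the cross term spends at most half of the $\tfrac{\eta\tau}{2}$ descent budget so that the coefficient emerging on the left is $(1-16\kappa)$ rather than a larger multiple of $\kappa$.
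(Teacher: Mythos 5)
Your overall architecture matches the paper's: smoothness, extraction of a $-\tfrac{\eta\tau}{2}\|\nabla F(\mathbf{x}_t)\|^2$ descent term, routing all noise through Lemma~\ref{lemma:noise} so that its $8\kappa\tau^2\sum_t\mathbb{E}\|\nabla F(\mathbf{x}_t)\|^2$ term is absorbed on the left, then telescoping and dividing by $(1-16\kappa)$. The gap is in the one step you flag as delicate: handling the cross term $\langle\nabla F(\mathbf{x}_t),\,n_t\rangle$ by a separately tuned Young's inequality cannot produce the stated constant $16\kappa$. Any Young split $\langle\nabla F(\mathbf{x}_t),-\eta n_t\rangle\le \epsilon\eta\tau\|\nabla F(\mathbf{x}_t)\|^2+\tfrac{\eta}{4\epsilon\tau}\|n_t\|^2$ contributes, after Lemma~\ref{lemma:noise}, a gradient coefficient of $\epsilon\eta\tau+\tfrac{2\kappa\eta\tau}{\epsilon}$ \emph{on top of} the $16\kappa\cdot\tfrac{\eta\tau}{2}$ already coming from the $\|n_t\|^2$ pieces with coefficient $\tfrac{\eta}{\tau}$; since $\epsilon+\tfrac{2\kappa}{\epsilon}>0$ for every $\epsilon>0$ (and is at least of order $\sqrt{\kappa}\gg\kappa$ at the optimum), you end up with a prefactor strictly worse than $(1-16\kappa)$, i.e.\ you prove a weaker inequality than the one stated. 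The paper avoids this entirely: it folds $n_t$ into the second argument of the three-point identity $\langle x,y\rangle=\tfrac12(\|x\|^2+\|y\|^2-\|x-y\|^2)$ with $x=\sqrt{\eta\tau}\,\nabla F(\mathbf{x}_t)$, so the cross term is handled \emph{exactly} (no descent budget spent), the noise enters only through $\|y\|^2$ with total coefficient $\tfrac{\eta}{\tau}$, and the leftover $-\tfrac12\|x-y\|^2$ term (which also contains $n_t$) is the negative quantity that cancels the matching positive $\|\sum_{i,j}(\nabla F_i(\mathbf{x}_{t,j}^i)+\tfrac1\tau n_t)\|^2$ term in $T_2$ under $\eta\le\tfrac{1}{\mathcal{L}\tau}$. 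That exact cancellation is what makes the coefficient come out to $8\kappa\eta\tau-\tfrac{\eta\tau}{2}=-\tfrac{\eta\tau}{2}(1-16\kappa)$.

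A secondary, smaller discrepancy: you apply the three-point identity per client and per local step, which forces you to bound $\|\nabla F(\mathbf{x}_t)-\nabla F_i(\mathbf{x}_{t,j}^i)\|^2$ and hence pay an extra $\eta\tau T\sigma_G^2$ through Assumption~3. The paper applies the identity to the \emph{averaged} discrepancy $\tfrac1m\sum_{i,j}(\nabla F_i(\mathbf{x}_{t,j}^i)-\nabla F_i(\mathbf{x}_t))$, for which $\tfrac1m\sum_i\nabla F_i(\mathbf{x}_t)=\nabla F(\mathbf{x}_t)$ exactly, so only the $\mathcal{L}^2$-drift term appears and all of the $16T\sigma_G^2/(1-16\kappa)$ in the statement comes from Lemma~\ref{lemma:noise} alone. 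Your route would inflate that coefficient. Neither issue breaks the qualitative convergence story, but as written your plan proves a lemma with different (strictly worse) constants than the one stated.
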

\begin{proof}
We first define the following notations for convenience.
\begin{align}
    \Delta_t^i &= \sum_{j=0}^{\tau-1} g_{t,j}^{i} := \sum_{j=0}^{\tau-1} \nabla f(\mathbf{x}_{t,j}^{i}, \xi_j^i) 
  \nonumber\\
    \Delta_t &:= \frac{1}{m}\sum_{i=1}^{m} \Delta_t^i + n_t, \nonumber
\end{align}
where $\xi_j^i$ is a random sample drawn from the local dataset $i$ at the local step $j$ and $n_t$ is a noise caused by the update recycling.

Based on Assumption 1, taking expectation of $F(\mathbf{x}_{t+1})$, we have:
\begin{align}
    \mathbb{E}[F(\mathbf{x}_{t+1})] &\leq F(\mathbf{x}_t) + \langle \nabla F(\mathbf{x}_t), \mathbb{E}[\mathbf{x}_{t+1} - \mathbf{x}_t] \rangle + \frac{\mathcal{L}}{2} \mathbb{E}[\| \mathbf{x}_{t+1} - \mathbf{x}_t \|^2] \nonumber \\
    &= F(\mathbf{x}_t) + \langle \nabla F(\mathbf{x}_t), \mathbb{E}[ -\eta \Delta_t] \rangle + \frac{\mathcal{L}}{2} \mathbb{E}[\| \eta \Delta_t \|^2] \nonumber\\
    &= F(\mathbf{x}_t) + \langle \nabla F(\mathbf{x}_t), \mathbb{E}[ -\eta \Delta_t + \eta\tau \nabla F(\mathbf{x}_{t}) - \eta\tau \nabla F(\mathbf{x}_{t})] \rangle + \frac{\mathcal{L}}{2} \mathbb{E}[\| \eta \Delta_t \|^2] \nonumber \\
    &= F(\mathbf{x}_t) - \eta \tau \| \nabla F(\mathbf{x}_{t}) \|^2 + \underset{T_1}{\underbrace{ \langle \nabla F(\mathbf{x}_t), \mathbb{E}[ -\eta\Delta_t + \eta\tau \nabla F(\mathbf{x}_{t})] \rangle }} + \frac{\mathcal{L}}{2} \underset{T_2}{\underbrace{ \mathbb{E}[\| \eta\Delta_t \|^2]}}. \label{eq:framework}
\end{align}
Now, let us bound $T_1$ and $T_2$ separately as follows.

\textbf{Bounding $T_1$.}
\begin{align}
    T_1 &= \left\langle \nabla F(\mathbf{x}_t), \mathbb{E} \left[ -\eta \Delta_t + \eta \tau \nabla F(\mathbf{x}_{t}) \right] \right\rangle \nonumber \\
    &= \left\langle \nabla F(\mathbf{x}_t), \mathbb{E} \left[ -\eta \left( \frac{1}{m} \sum_{i=0}^{m} \sum_{j=0}^{\tau-1} g_{t,j}^{i} + n_t \right) + \eta\tau \nabla F(\mathbf{x}_{t}) \right] \right\rangle \nonumber\\
    &= \left\langle \nabla F(\mathbf{x}_t), \mathbb{E} \left[ -\frac{\eta}{m} \sum_{i=0}^{m} \sum_{j=0}^{\tau-1}  g_{t,j}^{i} + \eta\tau \nabla F(\mathbf{x}_{t}) -\eta n_t \right] \right\rangle \nonumber\\
    &= \left\langle \nabla F(\mathbf{x}_t), \mathbb{E} \left[ - \frac{\eta}{m} \sum_{i=0}^{m} \sum_{j=0}^{\tau-1} \nabla F_i(\mathbf{x}_{t,j}^{i}) + \frac{\eta}{m} \sum_{i=1}^{m} \sum_{j=0}^{\tau-1} \nabla F_i(\mathbf{x}_{t}) -\eta n_t \right] \right\rangle \nonumber\\
    &= \left\langle \nabla F(\mathbf{x}_t), \mathbb{E}\left[ -\frac{\eta}{m} \sum_{i=0}^{m} \sum_{j=0}^{\tau-1} \left( \nabla F_i(\mathbf{x}_{t,j}^{i}) - \nabla F_i(\mathbf{x}_{t}) \right) -\eta n_t \right] \right\rangle \nonumber \\
    &= \left\langle \sqrt{\eta \tau} \nabla F(\mathbf{x}_t), \mathbb{E}\left[ - \frac{\sqrt{\eta}}{m\sqrt{\tau}} \sum_{i=0}^{m} \sum_{j=0}^{\tau-1} \left( \nabla F_i(\mathbf{x}_{t,j}^{i}) - \nabla F_i(\mathbf{x}_{t}) \right) -\frac{\sqrt{\eta}}{\sqrt{\tau}} n_t \right] \right\rangle \nonumber\\
    &= \frac{\eta\tau}{2} \left\| \nabla F(\mathbf{x}_t) \right\|^2 + \frac{1}{2} \mathbb{E}\left[ \left\| \frac{\sqrt{\eta}}{m \sqrt{\tau}} \sum_{i=1}^{m} \sum_{j=0}^{\tau-1} \left( \nabla F_i(\mathbf{x}_{t,j}^{i}) - \nabla F_i(\mathbf{x}_t) \right) + \frac{\sqrt{\eta}}{\sqrt{\tau}} n_t \right\|^2 \right] \nonumber \\
    &\quad - \frac{1}{2} \mathbb{E} \left[ \left\| \frac{\sqrt{\eta}}{m\sqrt{\tau}} \sum_{i=1}^{m}\sum_{j=0}^{\tau - 1} \nabla F_i(\mathbf{x}_{t,j}^{i}) + \frac{\sqrt{\eta}}{\sqrt{\tau}} n_t \right\|^2 \right] \label{eq:a2b2} \\
    &\leq \frac{\eta\tau}{2} \left\| \nabla F(\mathbf{x}_t) \right\|^2 + \mathbb{E}\left[ \left\| \frac{\sqrt{\eta}}{m \sqrt{\tau}} \sum_{i=1}^{m} \sum_{j=0}^{\tau-1} \left( \nabla F_i(\mathbf{x}_{t,j}^{i}) - \nabla F_i(\mathbf{x}_t) \right) \right\|^2 \right] + \mathbb{E}\left[ \left\| \frac{\sqrt{\eta}}{\sqrt{\tau}} n_t \right\|^2 \right] \nonumber \\
    &\quad - \frac{\eta}{2\tau} \mathbb{E} \left[ \left\| \frac{1}{m} \sum_{i=1}^{m}\sum_{j=0}^{\tau - 1} \nabla F_i(\mathbf{x}_{t,j}^{i}) + n_t \right\|^2 \right] \nonumber \\
    &\leq \frac{\eta\tau}{2} \left\| \nabla F(\mathbf{x}_t) \right\|^2 + \frac{\eta}{m} \sum_{i=1}^{m} \sum_{j=0}^{\tau-1} \mathbb{E}\left[ \left\| \nabla F_i(\mathbf{x}_{t,j}^{i}) - \nabla F_i(\mathbf{x}_t) \right\|^2 \right] + \frac{\eta}{\tau} \mathbb{E}\left[ \left\| n_t \right\|^2 \right] \nonumber \\
    &\quad - \frac{\eta}{2\tau} \mathbb{E} \left[ \left\| \frac{1}{m} \sum_{i=1}^{m}\sum_{j=0}^{\tau - 1} \nabla F_i(\mathbf{x}_{t,j}^{i}) + n_t \right\|^2 \right] \label{eq:a1_jensen} \\
    &\leq \frac{\eta\tau}{2} \left\| \nabla F(\mathbf{x}_t) \right\|^2 + \frac{\eta \mathcal{L}^2}{m} \sum_{i=1}^{m} \sum_{j=0}^{\tau-1} \mathbb{E}\left[ \left\| \mathbf{x}_{t,j}^{i} - \mathbf{x}_t \right\|^2 \right] + 
 \frac{\eta}{\tau} \mathbb{E}\left[ \left\| n_t \right\|^2 \right] \nonumber \\
    &\quad - \frac{\eta}{2 m^2 \tau} \mathbb{E} \left[ \left\| \sum_{i=1}^{m}\sum_{j=0}^{\tau - 1} \left( \nabla F_i(\mathbf{x}_{t,j}^{i}) + \frac{1}{\tau} n_t \right) \right\|^2 \right], \label{eq:a1}
\end{align}
where (\ref{eq:a2b2}) holds because $\left\langle x, y \right\rangle = \frac{1}{2}[\|x\|^2 + \|y\|^2 - \|x-y\|^2]$ for $x = \sqrt{\eta\tau} \nabla F(\mathbf{x}_t)$ and $\mathbf{y} = -\frac{\sqrt{\eta}}{m\sqrt{\tau}}\sum_{i=1}^{m}\sum_{j=0}^{\tau-1} (\nabla F_i(\mathbf{x}_{t,j}^{i}) - \nabla F_i(\mathbf{x}_t))$.
Also, (\ref{eq:a1_jensen}) is based on the convexity of $\ell_2$ norm and Jensen's inequality.

\textbf{Bounding $T_2$.}
\begin{align}
    T_2 &= \mathbb{E}[\| \eta\Delta_t \|^2] = \eta^2 \mathbb{E} \left[\left \| \frac{1}{m}\sum_{i=1}^{m} \sum_{j=0}^{\tau} g_{t,j}^i + n_t \right\|^2 \right] \nonumber \\
    & = \eta^2 \mathbb{E} \left[\left \| \frac{1}{m} \sum_{i=1}^{m} \sum_{j=0}^{\tau} \left( g_{t,j}^i + \frac{1}{\tau} n_t - \nabla F_i(\mathbf{x}_{t,j}^{i}) + \nabla F_i(\mathbf{x}_{t,j}^{i}) \right) \right\|^2 \right] \nonumber \\
    & \leq 2\eta^2 \mathbb{E} \left[\left \| \frac{1}{m} \sum_{i=1}^{m} \sum_{j=0}^{\tau} \left( g_{t,j}^i - \nabla F_i(\mathbf{x}_{t,j}^{i}) \right) \right\|^2 \right] + 2\eta^2 \mathbb{E} \left[\left \| \frac{1}{m} \sum_{i=1}^{m} \sum_{j=0}^{\tau} \left( \nabla F_i(\mathbf{x}_{t,j}^{i}) + \frac{1}{\tau} n_t \right) \right\|^2 \right] \nonumber \\
    & = \frac{2\eta^2}{m^2} \sum_{i=1}^{m} \mathbb{E} \left[\left \| \sum_{j=0}^{\tau} g_{t,j}^i - \nabla F_i(\mathbf{x}_{t,j}^{i}) \right\|^2 \right] + 2\eta^2 \mathbb{E} \left[\left \| \frac{1}{m} \sum_{i=1}^{m} \sum_{j=0}^{\tau} \nabla F_i(\mathbf{x}_{t,j}^{i}) + n_t \right\|^2 \right] \label{eq:independent} \\
    &\leq \frac{2\eta^2\tau}{m}\sigma_L^2 + \frac{2\eta^2}{m^2} \mathbb{E} \left[\left \| \sum_{i=1}^{m} \sum_{j=0}^{\tau} \left( \nabla F_i(\mathbf{x}_{t,j}^{i}) + \frac{1}{\tau} n_t \right) \right\|^2 \right] , \label{eq:a2}
\end{align}
where (\ref{eq:independent}) is due to the fact that $\mathbb{E}[\| \mathbf{x}_1 + \mathbf{x}_2 + \cdots + \mathbf{x}_n \|^2] = \mathbb{E}[ \| \mathbf{x}_1 \|^2 + \| \mathbf{x}_2 \|^2 + \cdots + \| \mathbf{x}_n \|^2]$ if $\mathbf{x}_i$s are independent of each other with zero mean and $\mathbb{E}[g_{t,j}^{i}] = \nabla F_i(\mathbf{x}_{t,j}^{i})$.

Now, by plugging in (\ref{eq:a1}) and (\ref{eq:a2}) into (\ref{eq:framework}), we have
\begin{align}
    \mathbb{E}[F(\mathbf{x}_{t+1})] &\leq F(\mathbf{x}_t) - \frac{\eta\tau}{2} \mathbb{E}\left[\left\| \nabla F(\mathbf{x}_t) \right\|^2\right] + \frac{\mathcal{L}\eta^2\tau}{m}\sigma_L^2 + \frac{\eta \mathcal{L}^2}{m} \sum_{i=1}^{m} \sum_{j=0}^{\tau-1} \mathbb{E}\left[ \left\| \mathbf{x}_{t,j}^{i} - \mathbf{x}_t \right\|^2 \right] \nonumber \\
    & \quad + \left(\frac{\mathcal{L}\eta^2}{m^2} - \frac{\eta}{2m^2\tau} \right) \mathbb{E} \left[\left \| \sum_{i=1}^{m} \sum_{j=0}^{\tau} \left( \nabla F_i(\mathbf{x}_{t,j}^{i}) + \frac{1}{\tau} n_t \right) \right\|^2 \right] + 
 \frac{\eta}{\tau} \mathbb{E}\left[ \left\| n_t \right\|^2 \right] \nonumber \\
    &\leq F(\mathbf{x}_t) - \frac{\eta\tau}{2} \mathbb{E}\left[\left\| \nabla F(\mathbf{x}_t) \right\|^2\right] + \frac{\mathcal{L}\eta^2\tau}{m}\sigma_L^2 + \frac{\eta}{\tau} \mathbb{E}\left[ \left\| n_t \right\|^2 \right] \nonumber \\
    &\quad + \frac{\eta \mathcal{L}^2}{m} \sum_{i=1}^{m} \sum_{j=0}^{\tau-1} \mathbb{E}\left[ \left\| \mathbf{x}_{t,j}^{i} - \mathbf{x}_t \right\|^2 \right], \label{eq:lr1}
\end{align}
where (\ref{eq:lr1}) holds if $\eta \leq \frac{1}{\mathcal{L}\tau}$.
Summing up (\ref{eq:lr1}) across $T$ communication rounds, we have
\begin{align}
    \sum_{t=0}^{T-1} \left( F(\mathbf{x}_{t+1}) - F(\mathbf{x}_t) \right) &\leq - \frac{\eta\tau}{2} \sum_{t=0}^{T-1} \mathbb{E}\left[\left\| \nabla F(\mathbf{x}_t) \right\|^2\right] + \frac{\mathcal{L}\eta^2\tau T}{m}\sigma_L^2 + \frac{\eta}{\tau} \sum_{t=0}^{T-1} \mathbb{E}\left[ \left\| n_t \right\|^2 \right] \nonumber \\
    &\quad + \frac{\eta \mathcal{L}^2}{m} \sum_{t=0}^{T-1} \sum_{i=1}^{m} \sum_{j=0}^{\tau-1} \mathbb{E}\left[ \left\| \mathbf{x}_{t,j}^{i} - \mathbf{x}_t \right\|^2 \right]. \nonumber
\end{align}

Based on Lemma~\ref{lemma:noise}, the right-hand side can be re-written as follows.
\begin{align}
    \sum_{t=0}^{T-1} \left( F(\mathbf{x}_{t+1}) - F(\mathbf{x}_t) \right) &\leq \left(8\kappa\tau\eta - \frac{\eta\tau}{2}\right) \sum_{t=0}^{T-1} \mathbb{E}\left[\left\| \nabla F(\mathbf{x}_t) \right\|^2\right] + \frac{\mathcal{L}\eta^2\tau T}{m}\sigma_L^2 + 4\eta\tau T \sigma_L^2 + 8\eta\tau T \sigma_G^2 \nonumber \\
    &\quad + \left( \frac{8\eta \mathcal{L}^2}{m} + \frac{\eta \mathcal{L}^2}{m} \right) \sum_{t=0}^{T-1} \sum_{i=1}^{m} \sum_{j=0}^{\tau-1} \mathbb{E}\left[ \left\| \mathbf{x}_{t,j}^{i} - \mathbf{x}_t \right\|^2 \right] \nonumber \\
    &= \left(8\kappa\tau\eta - \frac{\eta\tau}{2}\right) \sum_{t=0}^{T-1} \mathbb{E}\left[\left\| \nabla F(\mathbf{x}_t) \right\|^2\right] + \eta\tau T \left( \frac{\mathcal{L}\eta}{m} + 4 \right) \sigma_L^2 + 8\eta\tau T \sigma_G^2 \nonumber \\
    &\quad + \frac{9\eta \mathcal{L}^2}{m} \sum_{t=0}^{T-1} \sum_{i=1}^{m} \sum_{j=0}^{\tau-1} \mathbb{E}\left[ \left\| \mathbf{x}_{t,j}^{i} - \mathbf{x}_t \right\|^2 \right] \nonumber
\end{align}

After rearranging the telescoping sum, we finally have
\begin{align}
    \sum_{t=0}^{T-1} \mathbb{E}\left[\left\| \nabla F(\mathbf{x}_t) \right\|^2\right] &\leq \frac{2}{(1-16\kappa)\eta\tau} \left( F(\mathbf{x}_{0}) - F(\mathbf{x}_T) \right) + \frac{2 T}{1-16\kappa}\left( \frac{\mathcal{L}\eta}{m} + 4 \right) \sigma_L^2 + \frac{16T}{1-16\kappa} \sigma_G^2 \nonumber \\ 
    &\quad + \frac{18 \mathcal{L}^2}{(1 - 16\kappa) m\tau} \sum_{t=0}^{T-1} \sum_{i=1}^{m} \sum_{j=0}^{\tau-1} \mathbb{E}\left[ \left\| \mathbf{x}_{t,j}^{i} - \mathbf{x}_t \right\|^2 \right]. \nonumber
\end{align}
\end{proof}

Now, we can derive the following Theorem based on Lemma \ref{lemma:discrepancy}, \ref{lemma:framework}, and \ref{lemma:noise} as follows.
\begin{theorem}
Under assumption 1$\sim$3, if the learning rate $\eta \leq \frac{1 - 16 \kappa}{6\sqrt{30}\mathcal{L}\tau}$ and $\kappa < \frac{1}{16}$, we have
\begin{align}
\sum_{t=0}^{T-1} \mathbb{E} \left[\left\| \nabla F(\mathbf{x}_t) \right\|^2\right] &\leq \frac{4}{(1-16\kappa)\eta\tau} \left( F(\mathbf{x}_{0}) - F(\mathbf{x}_T) \right) + \frac{4 T}{1-16\kappa}\left( \frac{\mathcal{L}\eta}{m} + 4 + 9\mathcal{L}^2 \right) \sigma_L^2 + \frac{1080T \mathcal{L}^2 \eta^2 \tau^2}{1-16\kappa} \sigma_G^2. \nonumber
\end{align}
\end{theorem}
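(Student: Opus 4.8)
The plan is to obtain the theorem by substituting the model-discrepancy bound of Lemma~\ref{lemma:discrepancy} into the final term of Lemma~\ref{lemma:framework} and then ``closing the recursion'': the substitution reintroduces $\sum_t \mathbb{E}[\|\nabla F(\mathbf{x}_t)\|^2]$ on the right-hand side, and a sufficiently small step size lets that sum be absorbed back into the left-hand side. Before anything else I would verify that the stated hypothesis $\eta \le \frac{1-16\kappa}{6\sqrt{30}\,\mathcal{L}\tau}$ together with $\kappa<\frac{1}{16}$ (so that $0<1-16\kappa\le 1$) implies every precondition I need: since $6\sqrt{30}>2\sqrt{3}$, the bound forces both $\eta\le\frac{1}{2\sqrt{3}\,\mathcal{L}\tau}$ (required by Lemma~\ref{lemma:discrepancy}) and $\eta\le\frac{1}{\mathcal{L}\tau}$ (required by Lemmas~\ref{lemma:noise} and~\ref{lemma:framework}), so all three results may be invoked at once.

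Next I would bound the discrepancy sum. For each fixed $t$, Lemma~\ref{lemma:discrepancy} controls $\frac{1}{m}\sum_i \mathbb{E}[\|\mathbf{x}_{t,j}^{i}-\mathbf{x}_t\|^2]$ by a quantity independent of $j$, so summing over the $\tau$ local steps and the $T$ rounds multiplies the bound by $\tau T$ and turns the final term of Lemma~\ref{lemma:framework} into three pieces: a $\sigma_L^2$ piece, a $\sigma_G^2$ piece with coefficient $\frac{540\,T\mathcal{L}^2\eta^2\tau^2}{1-16\kappa}$, and the crucial gradient piece $\frac{540\,\mathcal{L}^2\eta^2\tau^2}{1-16\kappa}\sum_t\mathbb{E}[\|\nabla F(\mathbf{x}_t)\|^2]$. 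The fact that the $\sigma_G^2$ coefficient is already one half of the $1080$ appearing in the statement is the tell that the constants are calibrated around exactly this substitution route.

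The decisive step is the absorption. Because $6\sqrt{30}=\sqrt{1080}$, the step-size hypothesis gives $\eta^2\le\frac{(1-16\kappa)^2}{1080\,\mathcal{L}^2\tau^2}$, whence the gradient coefficient satisfies $\frac{540\,\mathcal{L}^2\eta^2\tau^2}{1-16\kappa}\le\frac{1-16\kappa}{2}\le\frac{1}{2}$. The combined inequality then has the self-referential form $S\le A+\frac{1}{2}S$ with $S=\sum_t\mathbb{E}[\|\nabla F(\mathbf{x}_t)\|^2]$ and $A$ collecting the descent term of Lemma~\ref{lemma:framework} together with all $\sigma_L^2$ and $\sigma_G^2$ contributions; moving $\frac{1}{2}S$ to the left and multiplying through by $2$ yields $S\le 2A$. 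This single doubling is what promotes the coefficient $\frac{2}{(1-16\kappa)\eta\tau}$ of the descent term to the stated $\frac{4}{(1-16\kappa)\eta\tau}$ and turns the $540$ above into the final $1080$ in the $\sigma_G^2$ term.

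The part I expect to require the most care is reconciling the constants into the single displayed coefficients. The $\sigma_L^2$ factor $\frac{4T}{1-16\kappa}\bigl(\frac{\mathcal{L}\eta}{m}+4+9\mathcal{L}^2\bigr)$ splits into a $\frac{4T}{1-16\kappa}(\frac{\mathcal{L}\eta}{m}+4)\sigma_L^2$ part inherited directly from Lemma~\ref{lemma:framework} and a residual $\sigma_L^2$ part from the substituted discrepancy bound that carries a spurious $\tau\eta^2$ factor; this residual must be re-expressed, via a second application of the step-size bound, as (a multiple of) the bare $9\mathcal{L}^2$ term rather than left as a separate summand. One must also account for the lower-order $\frac{16T}{1-16\kappa}\sigma_G^2$ term carried by Lemma~\ref{lemma:framework}, which the stated bound appears to fold into its constants. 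Once the self-bounding threshold $\eta\le\frac{1-16\kappa}{6\sqrt{30}\,\mathcal{L}\tau}$ is fixed, everything else is routine regrouping of the descent, $\sigma_L^2$, and $\sigma_G^2$ terms into the form displayed.
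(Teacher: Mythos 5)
Your proposal follows the paper's proof exactly: substitute Lemma~\ref{lemma:discrepancy} into the final term of Lemma~\ref{lemma:framework}, use $\eta \leq \frac{1-16\kappa}{6\sqrt{30}\mathcal{L}\tau}$ (with $6\sqrt{30}=\sqrt{1080}$) to bound the reintroduced gradient coefficient $\frac{540\mathcal{L}^2\eta^2\tau^2}{1-16\kappa}$ by $\frac{1}{2}$, and absorb it into the left-hand side, doubling all constants. You also correctly identify the two places where the paper's own bookkeeping is loose (the residual $\tau\eta^2\sigma_L^2$ term being folded into $9\mathcal{L}^2$, and the $\frac{16T}{1-16\kappa}\sigma_G^2$ term silently dropped), which the paper glosses over in the same way.
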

\begin{proof}
Based on Lemma \ref{lemma:discrepancy} and \ref{lemma:framework}, we have
\begin{align}
    \sum_{t=0}^{T-1} \mathbb{E}\left[\left\| \nabla F(\mathbf{x}_t) \right\|^2\right] &\leq \frac{2}{(1-16\kappa)\eta\tau} \left( F(\mathbf{x}_{0}) - F(\mathbf{x}_T) \right) + \frac{2 T}{1-16\kappa}\left( \frac{\mathcal{L}\eta}{m} + 4 \right) \sigma_L^2 + \frac{16T}{1-16\kappa} \sigma_G^2 \nonumber \\ 
    &\quad + \frac{18 \mathcal{L}^2}{(1 - 16\kappa) m\tau} \sum_{t=0}^{T-1} \sum_{i=1}^{m} \sum_{j=0}^{\tau-1} \mathbb{E}\left[ \left\| \mathbf{x}_{t,j}^{i} - \mathbf{x}_t \right\|^2 \right] \nonumber \\
    \sum_{t=0}^{T-1} \mathbb{E}\left[\left\| \nabla F(\mathbf{x}_t) \right\|^2\right]
    &\leq \frac{2}{(1-16\kappa)\eta\tau} \left( F(\mathbf{x}_{0}) - F(\mathbf{x}_T) \right) + \frac{2 T}{1-16\kappa}\left( \frac{\mathcal{L}\eta}{m} + 4 \right) \sigma_L^2 + \frac{16T}{1-16\kappa} \sigma_G^2 \nonumber \\ 
    &\quad + \frac{18 \mathcal{L}^2}{(1 - 16\kappa) \tau} \sum_{t=0}^{T-1} \sum_{j=0}^{\tau-1} \left( 5\eta^2\tau\sigma_L^2 + 30\eta^2\tau^2\sigma_G^2 + 30\eta^2\tau^2 \mathbb{E}\left[ \left\| \nabla F(\mathbf{x}_t) \right\|^2 \right] \right) \nonumber \\
    &\leq \frac{2}{(1-16\kappa)\eta\tau} \left( F(\mathbf{x}_{0}) - F(\mathbf{x}_T) \right) + \frac{2 T}{1-16\kappa}\left( \frac{\mathcal{L}\eta}{m} + 4 + 9\mathcal{L}^2 \right) \sigma_L^2 + \frac{540T \mathcal{L}^2 \eta^2 \tau^2}{1-16\kappa} \sigma_G^2 \nonumber \\ 
    &\quad + \frac{540 \mathcal{L}^2 \eta^2 \tau^2}{1 - 16\kappa} \sum_{t=0}^{T-1} \mathbb{E}\left[ \left\| \nabla F(\mathbf{x}_t) \right\|^2 \right] \nonumber \\
    &\leq \frac{2}{(1-16\kappa)\eta\tau} \left( F(\mathbf{x}_{0}) - F(\mathbf{x}_T) \right) + \frac{2 T}{1-16\kappa}\left( \frac{\mathcal{L}\eta}{m} + 4 + 9\mathcal{L}^2 \right) \sigma_L^2 + \frac{540T \mathcal{L}^2 \eta^2 \tau^2}{1-16\kappa} \sigma_G^2 \nonumber \\ 
    &\quad + \frac{1}{2} \sum_{t=0}^{T-1} \mathbb{E}\left[ \left\| \nabla F(\mathbf{x}_t) \right\|^2 \right] \label{eq:lr2} \\
    &\leq \frac{4}{(1-16\kappa)\eta\tau} \left( F(\mathbf{x}_{0}) - F(\mathbf{x}_T) \right) + \frac{4 T}{1-16\kappa}\left( \frac{\mathcal{L}\eta}{m} + 4 + 9\mathcal{L}^2 \right) \sigma_L^2 + \frac{1080T \mathcal{L}^2 \eta^2 \tau^2}{1-16\kappa} \sigma_G^2. \nonumber
\end{align}
where (\ref{eq:lr2}) holds if $\eta \leq \frac{1 - 16\kappa}{6\sqrt{30}\mathcal{L}\tau}$.
\end{proof}

\subsection {Experimental Settings in details} \label{subsec:appendix_hyper}
\textbf{Implementation Details} --
All our experiments are conducted on a GPU cluster that contains 2 NVIDIA A6000 GPUs per machine. We use TensorFlow 2.15.0 for training and MPI for model aggregations.
All individual experiments are performed at least three times, and the average accuracies are reported.
The total number of clients is 128 and randomly chosen 32 clients participate in every communication round.
We use mini-batch SGD with momentum (0.9) as the local optimizer.
Table~\ref{tab:hyper_setting} shows the hyper-parameter settings for all our experiments, used not only for our method but also for other SOTA methods.
\\ 
\begin{table*}[!htbp]
\scriptsize
\centering
\begin{tabular}{lcccc} \toprule
Hyperparameters         & CIFAR-10 (ResNet20) & CIFAR-100 (WRN-28) & FEMNIST (CNN) & AG News (DistillBERT) \\ \midrule
$\tau$ (local steps) & 20 & 20 & 20 & 20\\ 
batch size & 32 & 32 & 20 & 128\\ 
min learning rate & 0.2 & 0.1 & 0.01 & $1\mathrm{e}{-5}$\\ 
max learning rate & 0.2 & 0.4 & 0.01 & $1\mathrm{e}{-5}$\\ 
total epoch & 200 & 300 & 200 & 100\\ 
weight decay & $1\mathrm{e}{-4}$ & $1\mathrm{e}{-5}$ & $1\mathrm{e}{-4}$ & $1\mathrm{e}{-4}$\\ 
decay epoch & 100, 150 & 150, 200 & 100, 150 & 60, 80\\ \bottomrule
\end{tabular}
\caption{Hyperparameter Settings for all experiments}
\label{tab:hyper_setting}
\end{table*}
\\
\textbf{Artificial Data Heterogeneity} -- For benchmark datasets that are not naturally non-IID, we generate artificial data distributions using Dirichlet's distributions.
To evaluate the performance of our proposed method under realistic FL environments, the concentration coefficient $\alpha$ is configured as $0.1$ for CIFAR-10, CIFAR-100, and FEMNIST, and as $0.5$ for AG News.
Note that these small concentration coefficient values represent highly heterogeneous distributions of local samples across clients as well as imbalance in the number of samples across labels.\\
\\
\noindent
\textbf{Algorithm-Specific Hyperparameter Selection} -- Here, we summarize the hyper-parameter settings used to reproduce other SOTA methods, primarily following the configurations outlined in the original papers.
We find algorithm-specific hyper-parameters using a grid search that achieve accuracy reasonably close to the baseline algorithm (FedAvg) while minimizing communication costs, and then measure the validation accuracy as shown in Section~\ref{subsec:comparative study}.
Table \ref{tab:sota_hyper_setting} and Table \ref{tab:harmon_hyper_setting} show the hyper-parameter settings for SOTA methods and experiments shown in Table~\ref{tab:sota} and \ref{tab:harmony}, respectively.
When running FedPara, both convolution layers and fully connected (FC) layers are re-parameterized using their proposed method.
All hyperparameters shown in Table \ref{tab:sota_hyper_setting} and \ref{tab:harmon_hyper_setting} are defined in the original papers.
\\
\begin{table*}[!htbp]
\scriptsize
\centering
\resizebox{\textwidth}{!}{%

\begin{tabular}{llcccc} \toprule
Algorithm &Hyperparameters & CIFAR-10 (ResNet20) & CIFAR-100 (WRN-28) & FEMNIST (CNN) & AG News (DistillBERT) \\ \midrule
FedPAQ & $s$ (quantization level) & 16 & 16 & 8 & 8\\ 
FedPara & parameters ratio [\%] & 0.5 & 0.6 & 0.2 & 0.3\\ 
LBGM & $\delta$ (threshold) & 0.95 & 0.98 & 0.96 & 0.6\\ 
PruneFL &  reconfiguration iteration & 50 & 50 & 50 & 50\\ 
FedDropoutAvg & $f dr$ (federated dropout rate) & 0.5 & 0.4 & 0.75 & 0.5\\ 
FedBAT & $\rho, \phi$ (coefficient, warm-up ratio) & 6, 0.5 & 6, 0.5 & 6, 0.5 & 6, 0.5\\ 
\bottomrule
\end{tabular}
}
\caption{Hyperparameter Settings for Comparative Study \ref{subsec:comparative study} of communication-efficient FL methods}
\label{tab:sota_hyper_setting}
\end{table*}

\begin{table*}[!htbp]
\scriptsize
\centering
\begin{tabular}{llcccc} \toprule
Algorithm &Hyperparameters & CIFAR-10 (ResNet20) & FEMNIST (CNN)
\\ \midrule
FedProx & $\mu$ (proximal term coefficient) & 0.001 & 0.001\\ 
FedPAQ & $s$ (quantization level) & 16 & 8\\ 
FedOpt & $\eta$ (server learning rate) & 0.9 & 1.2\\ 
MOON & $\mu$ (control the weight of
model-contrastive loss), $\tau$ (temperature parameter) & 1, 1.5 & 1, 0.5\\ 
FedMut & $\alpha$ (distance scaling factor), $\beta$ (dynamic mutation factor) & 0.5, 1 & 0.5, 1\\ 
FedACG & $\lambda$ (global momentum scaling factor), $\beta$ (penalty coefficient) & 0.7, 0.01 & 0.7, 0.01\\  
\bottomrule
\end{tabular}
\caption{Hyperparameter Settings for Harmonization with Other FL methods \ref{subsec:harmonization}}
\label{tab:harmon_hyper_setting}
\end{table*}

\subsection {Extra Results}
\label{subsec:extra_result}

\textbf{Sensitivity on $\delta$} --
We performed a grid search for each dataset to find the best $\delta$ setting which yields reasonable accuracy together with the maximum communication cost reduction.
Table~\ref{tab:cifar10_k}--\ref{tab:agnews_k} show the four benchmarks' accuracy and communication costs corresponding to various $\delta$ settings.
\\
\textbf{How much could be recycled safely?} -- 
We also investigate the impact of $\delta$ on model accuracy and communication costs.
We divide the number of model aggregations at each layer by the total number of communication rounds to calculate the layer-wise communication cost.
Then, we sum up the calculated layer-wise costs to get the total communication cost.
Intuitively, the larger the $\delta$, the lower the communication cost.
However, the model accuracy is expected to drop as more layers have their updates recycled.


Table~\ref{tab:cifar10_k} and \ref{tab:cifar100_k} show CIFAR-10 and CIFAR-100 experimental results for various $\delta$ settings.
One key observation is that the accuracy is almost not degraded when \texttt{LUAR} is applied with $\delta \leq 12$ for both datasets.
This means that many network layers have quite stable gradient dynamics, and thus their updates can be safely recycled.
In addition, the accuracy is hardly reduced until the communication cost is reduced by almost $50\%$.
This is a significant benefit especially in FL environments where the network bandwidth is extremely limited.
\\
\textbf{Results under varying degrees of data heterogeneity} -- The degree of non-IIDness strongly affects the training efficiency of Federated Learning methods. We conducted additional experiments to demonstrate that FedLUAR is robust to various degrees of non-IIDness. Table~\ref{tab:cifar10_k data} and Table~\ref{tab:agnews data} show CIFAR-10 and AG News experimental results for various Dirchlet concentration factor $\alpha$ settings. In both benchmarks, FedLUAR achieves comparable accuracy to FedAvg regardless of $\alpha$, while considerably reducing the communication cost. Therefore, we conclude that FedLUAR is robust to the degree of non-IIDness.
\\
\textbf{Performance under different numbers of active clients} -- We conducted additional ablation study using different numbers of active clients.
Table~\ref{tab:cifar10 active} and Table~\ref{tab:femnist active} show that, regardless of the total number of clients, FedLUAR achieves accuracy comparable to FedAvg while significantly reducing the communication cost.
This ablation study demonstrates the superior scalability of FedLUAR.
\\
\textbf{Learning Curves} --
Figure~\ref{fig:cifar100} and \ref{fig:femnist} show the learning curve comparisons for CIFAR-100 and FEMNIST benchmarks, respectively.
To highlight the difference clearly, we chose only 3 representative methods and compare their curves to those of \texttt{FedLUAR}.
It is clearly shown that \texttt{FedLUAR} achieves virtually the same accuracy as FedAvg while having a significantly reduced communication cost.
These results well prove the efficacy of the proposed update recycling method.
\\

\begin{table}[!htbp]
\scriptsize
\centering
\begin{tabular}{ccc} \toprule
$\delta$ & Validation Accuracy (\%) & Communication Cost \\ \midrule
0 & $61.27\pm0.7\%$ & $1.00$ \\
4 & $61.25\pm0.4\%$ & $0.84$ \\
8 & $60.92\pm1.7\%$ & $0.68$ \\
12 & $60.15\pm0.7\%$ & $0.47$ \\
16 & $50.07\pm1.6\%$ & $0.30$\\ \bottomrule
\end{tabular}
\vspace{0.3cm}
\caption{
    The CIFAR-10 (ResNet20) classification performance with varying $\delta$ settings. 
}
\label{tab:cifar10_k}
\end{table}

\begin{table}[!htbp]
\scriptsize
\centering
\begin{tabular}{ccc} \toprule
$\delta$ & Validation Accuracy (\%) & Communication Cost \\ \midrule
0 & $59.88\pm0.8\%$ & $1.00$ \\
4 & $59.85\pm0.1\%$& $0.88$ \\
8 & $59.93\pm0.1\%$& $0.76$ \\
12 & $59.73\pm0.6\%$& $0.61$ \\
14 & $56.49\pm0.1\%$& $0.54$ \\
16 & $55.03\pm0.7\%$ & $0.51$ \\ 
20 & $49.60\pm0.2\%$& $0.36$ \\
\bottomrule
\end{tabular}
\vspace{0.3cm}
\caption{
    The CIFAR-100 (WideResNet28) classification performance with varying $\delta$ settings. 
}
\label{tab:cifar100_k}
\end{table}

\begin{table}[!htbp]
\scriptsize
\centering
\begin{tabular}{ccc} \toprule
$\delta$ & Validation Accuracy (\%) & Communication Cost \\ \midrule
0 & $71.01\pm 0.4\%$ & $1.00$ \\
1 & $ 71.46\pm 0.1\%$ & $0.50$ \\
2 & $73.17\pm 1.1\%$& $0.18$ \\
3 & $60.35\pm 2.6\%$ & $0.03$ \\ \bottomrule
\end{tabular}
\vspace{0.3cm}
\caption{
    The FEMNIST (CNN) classification performance with varying $\delta$ settings. 
}
\label{tab:femnist_k}
\end{table}

\begin{table}[!htbp]
\scriptsize
\centering
\begin{tabular}{ccc} \toprule
$\delta$ & Validation Accuracy (\%) & Communication Cost \\ \midrule
0 & $82.66\pm 0.1\%$ & $1.00$ \\
10 & $82.82\pm 0.1\%$ & $0.56$ \\
20 & $82.24\pm 0.1\%$& $0.36$ \\
30 & $82.80\pm 0.1\%$ & $0.17$ \\ 
35 & $79.00\pm 0.1\%$ & $0.08$ \\ 
\bottomrule
\end{tabular}
\vspace{0.3cm}
\caption{
    The AG news (DistillBERT) classification performance with varying $\delta$ settings. 
}
\label{tab:agnews_k}
\end{table}

\begin{table}[!htbp]
\scriptsize
\centering
\begin{tabular}{ccccc} \toprule
Method & & $\alpha = 0.1$ & $\alpha = 0.5$ & $\alpha = 1.0$ \\ \midrule
 & Comm & Acc & Acc & Acc \\
\midrule
FedAvg & $1.00$ & $61.27\%$ & $76.54\%$ & $79.73\%$ \\
FedLUAR & $0.47$ & $60.15\%$ & $76.04\%$ & $79.50\%$ \\
\bottomrule
\end{tabular}
\vspace{0.3cm}
\caption{
    CIFAR-10 with various Dirchlet concentration factor $\alpha$ settings. The number of recycled layers, $\delta = 10$ out of 20 layers in ResNet20.
}
\label{tab:cifar10_k data}
\end{table}

\begin{table}[!htbp]
\scriptsize
\centering
\begin{tabular}{ccccc} \toprule
Method & & $\alpha = 0.1$ & $\alpha = 0.5$ & $\alpha = 1.0$ \\ \midrule
 & Comm & Acc & Acc & Acc \\
\midrule
FedAvg & $1.00$ & $81.53\%$ & $82.66\%$ & $83.22\%$ \\
FedLUAR & $0.17$ & $81.88\%$ & $82.80\%$ & $82.75\%$ \\
\bottomrule
\end{tabular}
\vspace{0.3cm}
\caption{
    AG News with various Dirchlet concentration factor $\alpha$ settings. The number of recycled layers, $\delta = 30$ out of 40 layers in DistilBERT.
}
\label{tab:agnews data}
\end{table}

\begin{table}[!htbp]
\scriptsize
\centering
\begin{tabular}{ccccc} \toprule
 Method & & 64 (0.5) & 128 (0.25) & 256 (0.125) \\ \midrule
 & Comm & Acc & Acc & Acc \\
\midrule
FedAvg & $1.00$ & $54.24\%$ & $61.27\%$ & $57.81\%$ \\
FedLUAR & $0.48$ & $53.34\%$ & $60.15\%$ & $57.81\%$ \\
\bottomrule
\end{tabular}
\vspace{0.3cm}
\caption{
    The $\delta=10$ out of 20 layers in ResNet20 for FedLUAR. 64, 128, and 256 indicate the total number of clients, and the numbers in parentheses (0.5, 0.25, 0.125) represent the client activation ratio on CIFAR-10.
}
\label{tab:cifar10 active}
\end{table}

\begin{table}[!htbp]
\scriptsize
\centering
\begin{tabular}{ccccc} \toprule
 Method & & 64 (0.5) & 128 (0.25) & 256 (0.125) \\ \midrule
 & Comm & Acc & Acc & Acc \\
\midrule
FedAvg & $1.00$ & $66.31\%$ & $71.01\%$ & $75.97\%$ \\
FedLUAR & $0.14$ & $68.15\%$ & $73.17\%$ & $76.72\%$ \\
\bottomrule
\end{tabular}
\vspace{0.3cm}
\caption{
    The $\delta=2$ out of 4 layers in CNN for FedLUAR. 64, 128, and 256 indicate the total number of clients, and the numbers in parentheses (0.5, 0.25, 0.125) represent the client activation ratio on FEMNIST.
}
\label{tab:femnist active}
\end{table}

\begin{figure}[t]
\centering
\includegraphics[width=0.8\columnwidth]{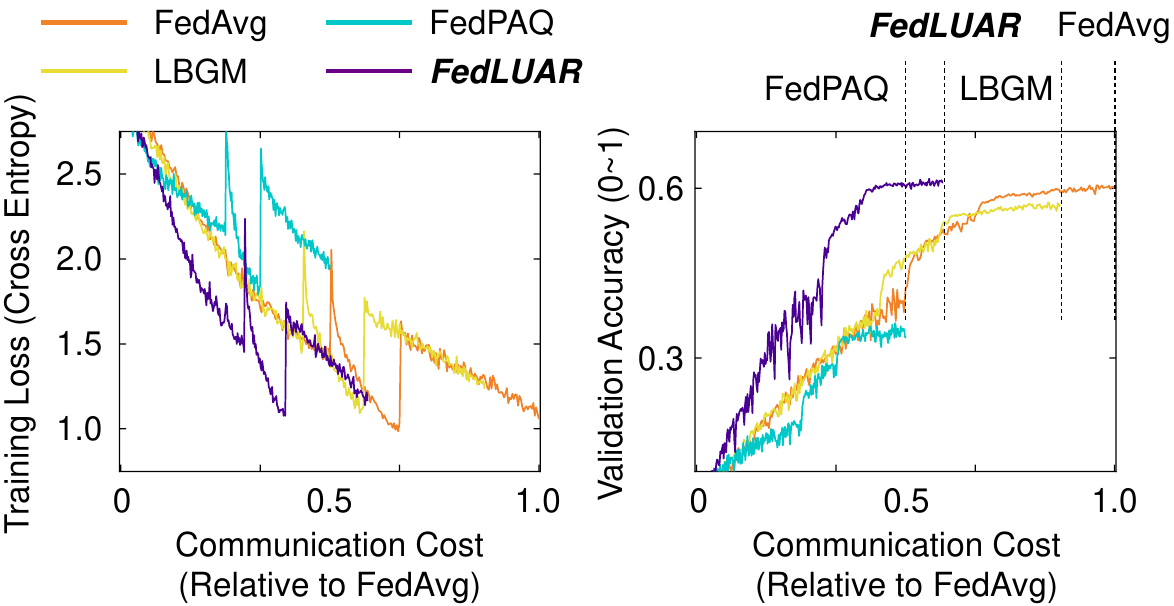}
\caption{
    The learning curve comparisons for CIFAR-100 (Wide-ResNet28-10). The x-axis represents the communication cost relative to FedAvg.
    FedPAQ has the least amount of communication cost for 300 epochs, however it loses the accuracy too much. \texttt{FedLUAR} nearly does not drop the accuracy while significantly reducing the communication cost.
}
\label{fig:cifar100}
\end{figure}

\begin{figure}[t]
\centering
\includegraphics[width=0.8\columnwidth]{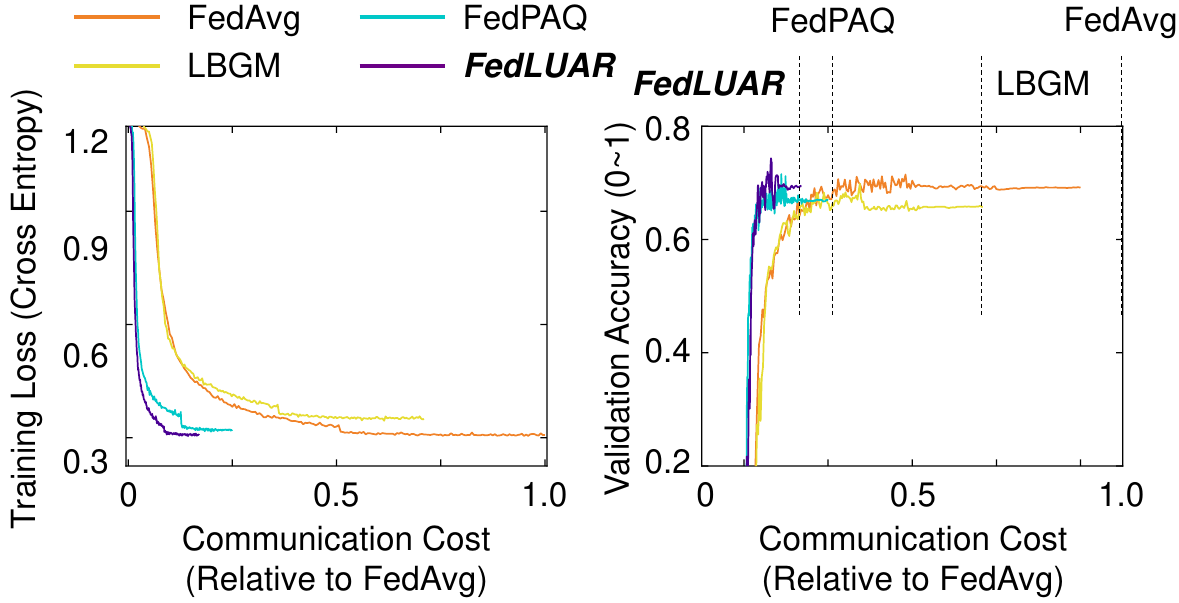}
\caption{
    The learning curve comparisons for FEMNIST (CNN). The x-axis represents the communication cost relative to FedAvg.
    \texttt{FedLUAR} significantly reduces the communication cost while maintaining the model accuracy.
}
\label{fig:femnist}
\end{figure}


\end{document}